\let\NAT@parse\undefined
\font\stixfrak=stix-mathfrak at 10pt
\newcommand*\linkcolours{ForestGreen}
\newcolumntype{Y}{>{\centering\arraybackslash}X}
\newcommand\extraspace{3pt}
\newcounter{MYtempeqncnt}
\newcommand{\removelatexerror}{\let\@latex@error\@gobble}
\def\@endtheorem{\endtrivlist}
\newtheorem{definition}{Definition}
\newtheorem{proposition}{Proposition}
\newtheorem{corollary}{Corollary}
\newtheorem{lemma}{Lemma}
\newtheorem{problem}{Problem}
\newcommand{\nn}{{\mathscr{N}\negthickspace\negthickspace\negthinspace\mathscr{N}}\negthinspace}
\newcommand{\ou}{%
  \mathrel{%
    \vcenter{\offinterlineskip
      \ialign{##\cr$<$\cr\noalign{\kern-1.5pt}$>$\cr}%
    }%
  }%
}%
\newcommand{\overbar}[1]{\mkern 3.0mu\overline{\mkern-2.5mu#1\mkern-2.0mu}\mkern 2.0mu}
\newcommand{\lblkbrbrack}{\negthinspace\text{{\stixfrak\char"36}}\normalfont}
\newcommand{\rblkbrbrack}{\text{{\stixfrak\char"37}}\normalfont}
\newcommand{\subarg}[1]{\lblkbrbrack #1 \rblkbrbrack}
\newcommand{\optprob}[1]{{\bfseries\texttt{#1}}}
\begin{document}

\title{\LARGE \bf
Safe-by-Repair: A Convex Optimization Approach for Repairing Unsafe Two-Level Lattice Neural Network Controllers
}


\author{Ulices Santa Cruz$^{1}$, James Ferlez$^{1}$, and Yasser Shoukry$^{1}$
\thanks{
This work was partially sponsored by the NSF awards \#CNS-2002405 and \#CNS-2013824.
}
\thanks{$^{1}$Ulices Santa Cruz, James Ferlez, and  Yasser Shoukry are with the Department of Electrical Engineering and Computer Science, University of California Irvine, Email: \{usantacr, jferlez,yshoukry@uci.edu}%
}


\maketitle

\begin{abstract}


In this paper, we consider the problem of \emph{repairing} a data-trained Rectified Linear Unit (ReLU) Neural Network (NN) controller for a discrete-time, input-affine system. That is we assume that such a NN controller is available, and we seek to repair unsafe closed-loop behavior at one known ``counterexample'' state while simultaneously preserving a notion of safe closed-loop behavior on a separate, verified set of states. To this end, we further assume that the NN controller has a Two-Level Lattice (TLL) architecture, and exhibit an algorithm that can systematically and efficiently repair such an network. Facilitated by this choice, our approach uses the unique semantics of the TLL architecture to divide the repair problem into two significantly decoupled sub-problems, one of which is concerned with repairing the un-safe counterexample -- and hence is essentially of local scope -- and the other of which ensures that the repairs are realized in the output of the network -- and hence is essentially of global scope. We then show that one set of sufficient conditions for solving each these sub-problems can be cast as a convex feasibility problem, and this allows us to formulate the TLL repair problem as two separate, but significantly decoupled, convex optimization problems. Finally, we evaluate our algorithm on a TLL controller on a simple dynamical model of a four-wheel-car.

\end{abstract}



\section{INTRODUCTION}

The proliferation of Neural Networks (NNs) as safety-critical controllers has made obtaining provably correct NN controllers vitally important. However, most current techniques for doing so involve a repeatedly training and verifying a NN until adequate safety properties have been achieved. Such methods are not only inherently computationally expensive (because training and verification of NNs are), their convergence properties can be extremely poor. For example, when verifying multiple safety properties, such methods can cycle back and forth between safety properties, with each subsequent retraining achieving one safety property by undoing another one.

An alternative approach obtains safety-critical NN controllers by \emph{repairing} an existing NN controller. Specifically, it is assumed that an already-trained NN controller is available that performs in a \emph{mostly} correct fashion, albeit with some specific, known instances of incorrect behavior. But rather than using retraining techniques, repair entails \emph{systematically} altering the parameters of the original controller \emph{in a limited way}, so as to retain the original safe behavior while simultaneously correcting the unsafe behavior. The objective of repair is to exploit as much as possible the safety that was learned during the training of the original NN parameters, rather than allowing re-training to \emph{unlearn} safe behavior.

Despite these advantages, the NN repair problem is challenging because it has two main objectives, both of which are at odds with each other. In particular, repairing an unsafe behavior requires altering the NN's response in a \emph{local} region of the state space, but changing even a few neurons generally affects the \emph{global} response of the NN -- which could undo the initial safety guarantee supplied with the network. This tension is especially relevant for general deep NNs, and repairs realized on neurons in their latter layers. This is especially the case for repairing controllers, where the relationship between specific neurons and their importance to the overall safety properties is difficult to discern. As a result, there has been limited success in studying NN controller repair, especially for nonlinear systems.

In this paper, we exhibit an explicit algorithm that can repair a NN controller for a \emph{discrete-time, input-affine nonlinear} system. The cornerstone of our approach is to consider NN controllers of a specific architecture: in particular, the recently proposed Two-Level Lattice (TLL) NN architecture \cite{FerlezAReNAssuredReLU2020}. The TLL architecture has unique neuronal semantics, and those semantics greatly facilitate finding a balance between the local and global trade-offs inherent in NN repair. In particular, by assuming a TLL architecture, we can separate the problem of controller repair into two \emph{significantly decoupled} problems, one consisting of essentially only local considerations and one consisting of essentially only global ones.

\textbf{Related Work}: Repairing (or patching) NNs can be traced to the late 2000s. An early result on patching connected transfer learning and concept drift with patching \cite{Patch1};
another result established fundamental requirements to apply classifier patching on NNs by using inner layers to learn a patch for concept drift in an image classifier network \cite{Patch2}.
Another approach based on a Satisfiability Modulo Theory (SMT) formulation of the repair problem was proposed by \cite{SRDM2019} where they changed the parameters of a classifier network to comply with a safety specification, i.e. where the designer knows exactly the subset of the input space to be classified.
This prior work nonetheless is heuristic-based and so not guaranteed to produced desired results, which was noticed by \cite{minGoldberger} who  cast  the  problem  of patching (minimal repair) as a verification problem for NNs (including Deep ones). However, this work focused on a restricted version of the problem in which the changes in weights are limited to a single layer.
Finally, \cite{dong2020repairing} proposed a verification-based approach for repairing DNNs but not restricted to modifying the output; instead, proposed to identify and modify the most relevant neurons that causes the safety violation using gradient guidance.


\section{Preliminaries} 
\label{sec:preliminaries}

\subsection{Notation} 
\label{sub:notation}
We will denote the real numbers by $\mathbb{R}$. For an $(n \times m)$ matrix 
(or vector), $A$, we will use the notation $\llbracket A \rrbracket_{i,j}$ to 
denote the element in the $i^\text{th}$ row and $j^\text{th}$ column of $A$. 
Analogously, the notation $\llbracket A \rrbracket_{i,\cdot}$ will denote the 
$i^\text{th}$ row of $A$, and $\llbracket A \rrbracket_{\cdot, j}$ will denote 
the $j^\text{th}$ column of $A$; when $A$ is a vector instead of a matrix, both 
notations will return a scalar corresponding to the corresponding element in 
the vector. Let $\mathbf{0}_{n,m}$ be an $(n \times m)$ matrix of zeros. We 
will use bold parenthesis $\;\subarg{ \cdot }$ to delineate the arguments to a 
function that \emph{returns a function}.
We use the functions $\mathtt{First}$ and $\mathtt{Last}$ to return the first 
and last elements of an ordered list (or a vector in $\mathbb{R}^n$). The 
function $\mathtt{Concat}$ concatenates two ordered lists, or two vectors in 
$\mathbb{R}^n$ and $\mathbb{R}^m$ along their (common) nontrivial dimension to 
get a third vector in $\mathbb{R}^{n+m}$.
Finally, $B(x;\delta)$ denotes an open Euclidean ball centered at $x$ with 
radius $\delta$. The norm $\lVert \cdot \rVert$ will refer to the Euclidean 
norm.

\subsection{Dynamical Model}
In this paper, we will consider the general case of a discrete-time 
input-affine nonlinear system $\Sigma$ specified by:
\begin{equation} \label{eq:system_dynamics}
    \Sigma:
    \begin{cases}
    x_{i+1}=f(x_i)+g(x_i)u_i    
    \end{cases}
\end{equation}
where $x\in\mathbb{R}^n$ is the state, $u\in\mathbb{R}^m$ is the input. In 
addition, $f:\mathbb{R}^n \rightarrow \mathbb{R}^n$ and $g:\mathbb{R}^n 
\rightarrow \mathbb{R}^n$ are continuous and smooth functions of $x$.

\begin{definition}[Closed-loop Trajectory]
	Let $u : \mathbb{R}^n \rightarrow \mathbb{R}^m$. Then a \textbf{closed-loop 
	trajectory} of the system \eqref{eq:system_dynamics} under $u$, starting 
	from state $x_0$, will be denoted by the sequence 
	$\{\zeta_i^{x_0}(u)\}_{i=0}^\infty$. That is $\zeta_{i+1}^{x_0}(u) = 
	f(\zeta_{i}^{x_0}(u)) + g(\zeta_{i}^{x_0}(u)) \cdot u(\zeta_{i}^{x_0}(u))$ 
	and $\zeta_{0}^{x_0}(u) = x_0$.
\end{definition}

\begin{definition}[Workspace]
\label{def:workspace}
	We will assume that trajectories of \eqref{eq:system_dynamics} are confined 
	to a connected, compact workspace, $X_\textsf{ws}$ with non-empty interior, 
	of size $\text{ext}( X_\textsf{ws} ) \triangleq \sup_{x\in X_\textsf{ws}} 
	\lVert x \rVert$.
\end{definition}

\subsection{Neural Networks} 
\label{sub:neural_networks}
We will exclusively consider Rectified Linear Unit Neural Networks (ReLU NNs). 
A $K$-layer ReLU NN is specified by composing $K$ \emph{layer} functions, each 
of which may be either linear and nonlinear. A \emph{nonlinear} layer with 
$\mathfrak{i}$ inputs and $\mathfrak{o}$ outputs is specified by a 
$(\mathfrak{o} \times \mathfrak{i} )$ real-valued matrix of \emph{weights}, 
$W$, and a $(\mathfrak{o} \times 1)$ real-valued matrix of \emph{biases}, $b$ 
as follows: $L_{\theta} :  z \mapsto \max\{ W z + b, 0 \}$ with the $\max$ 
function taken element-wise, and $\theta \triangleq (W,b)$. A \emph{linear} 
layer is the same as a nonlinear layer, only it omits the nonlinearity 
$\max\{\cdot , 0\}$; such a layer will be indicated with a superscript 
\emph{lin}, e.g. $L^\text{lin}_{\theta}$. Thus, a $K$-layer ReLU NN function as 
above is specified by $K$ layer functions $\{L_{\theta^{(i)}} : i = 1, \dots, 
K\}$ that are \emph{composable}: i.e. they satisfy $\mathfrak{i}_{i} = 
\mathfrak{o}_{i-1}: i = 2, \dots, K$. We will annotate a ReLU function $\nn$ by 
a \emph{list of its parameters} $\Theta \triangleq$ $( \theta^{|1},$ $\dots,$ 
$\theta^{|K} )$\footnote{That is $\Theta$ is not the concatenation of the 
$\theta^{(i)}$ into a single large matrix, so it preserves information about 
the sizes of the constituent $\theta^{(i)}$.}.

The number of layers and the \emph{dimensions} of the matrices $\theta^{|i} = 
(\; W^{|i}, b^{|i}\; )$ specify the \emph{architecture} of the ReLU NN. 
Therefore, we will denote the \textbf{architecture} of the ReLU NN 
$\nn_{\Theta}$ by
	$\text{Arch}(\Theta) \triangleq ( (n,\mathfrak{o}_{1}), (\mathfrak{i}_{2},\mathfrak{o}_{2}), \ldots, 
		(\mathfrak{i}_{K}, m)).$


\subsection{Special NN Operations} 
\label{sub:special_nn_operations}
\begin{definition}[Sequential (Functional) Composition]
\label{def:functional_composition}
	Let $\nn_{\Theta_{\scriptscriptstyle 1}}$ and 
	$\nn_{\Theta_{\scriptscriptstyle 2}}$ be two NNs where 
	$\mathtt{Last}(\text{Arch}(\Theta_1)) = (\mathfrak{i}, \mathfrak{c})$ and 
	$\mathtt{First}(\text{Arch}(\Theta_2)) =  (\mathfrak{c}, \mathfrak{o})$. 
	Then the \textbf{functional composition} of 
	$\nn_{\Theta_{\scriptscriptstyle 1}}$ and $\nn_{\Theta_{\scriptscriptstyle 
	2}}$, i.e. $\nn_{\Theta_{\scriptscriptstyle 1}} \circ 
	\nn_{\Theta_{\scriptscriptstyle 2}}$, is a well defined NN, and can be 
	represented by the parameter list $\Theta_{1} \circ \Theta_{2} \triangleq 
	\mathtt{Concat}(\Theta_1, \Theta_2)$.
\end{definition}
\begin{definition}
	\label{def:parallel_composition}
	Let $\nn_{\Theta_{\scriptscriptstyle 1}}$ and 
	$\nn_{\Theta_{\scriptscriptstyle 2}}$ be two $K$-layer NNs with parameter 
	lists: $\Theta_i = ((W^{\scriptscriptstyle |1}_i, b^{\scriptscriptstyle 
	|1}_i), \dots, (W^{\scriptscriptstyle |K}_i, b^{\scriptscriptstyle |K}_i)), 
	i = 1,2$. Then the \textbf{parallel composition} of 
	$\nn_{\Theta_{\scriptscriptstyle 1}}$ and $\nn_{\Theta_{\scriptscriptstyle 
	2}}$ is a NN given by the parameter list
	\begin{equation}
		\Theta_{1} \parallel \Theta_{2} \triangleq \big(\negthinspace
			\left(
				\negthinspace
				\left[
					\begin{smallmatrix}
						W^{\scriptscriptstyle |1}_1 \\
						W^{\scriptscriptstyle |1}_2
					\end{smallmatrix}
				\right],
				\left[
					\begin{smallmatrix}
						b^{\scriptscriptstyle |1}_1 \\
						b^{\scriptscriptstyle |1}_2
					\end{smallmatrix}
				\right]
				\negthinspace
			\right),
			{\scriptstyle \dots},
			\left(
				\negthinspace
				\left[
					\begin{smallmatrix}
						W^{\scriptscriptstyle |K}_1 \\
						W^{\scriptscriptstyle |K}_2
					\end{smallmatrix}
				\right],
				\left[
					\begin{smallmatrix}
						b^{\scriptscriptstyle |K}_1 \\
						b^{\scriptscriptstyle |K}_2
					\end{smallmatrix}
				\right]
				\negthinspace
			\right)
		\negthinspace\big).
	\end{equation}
	That is $\Theta_{1} \negthickspace \parallel \negthickspace \Theta_{2}$ 
	accepts an input of the same size as (both) $\Theta_1$ and $\Theta_2$, but 
	has as many outputs as $\Theta_1$ and $\Theta_2$ combined.
\end{definition}

\begin{definition}[$n$-element $\min$/$\max$ NNs]
	\label{def:n-element_minmax_NN}
	An $n$\textbf{-element $\min$ network} is denoted by the parameter list 
	$\Theta_{\min_n}$. $\nn\subarg{\Theta_{\min_n}}: \mathbb{R}^n \rightarrow 
	\mathbb{R}$ such that $\nn\subarg{\Theta_{\min_n}}(x)$ is the the minimum 
	from among the components of $x$ (i.e. minimum according to the usual order 
	relation $<$ on $\mathbb{R}$). An $n$\textbf{-element $\max$ network} is 
	denoted by $\Theta_{\max_n}$, and functions analogously. These networks are 
	described in \cite{FerlezAReNAssuredReLU2020}.
\end{definition}


\subsection{Two-Level-Lattice (TLL) Neural Networks} 
\label{sub:two_layer_lattice_neural_networks}
In this paper, we will be especially concerned with ReLU NNs that have the 
Two-Level Lattice (TLL) architecture, as introduced with the AReN algorithm in 
\cite{FerlezAReNAssuredReLU2020}. Thus we define a TLL NN as follows.
\begin{definition}[TLL NN {\cite[Theorem 2]{FerlezAReNAssuredReLU2020}}]
\label{def:scalar_tll}
A NN that maps $\mathbb{R}^n \rightarrow \mathbb{R}$ is said to be \textbf{TLL 
NN of size} $(N,M)$ if the size of its parameter list $\Xi_{\scriptscriptstyle 
N,M}$ can be characterized entirely by integers $N$ and $M$ as follows.
\begin{equation}
	\Xi_{N,M} \negthinspace \triangleq  \negthinspace
		\Theta_{\max_M} \negthinspace\negthinspace
	\circ \negthinspace
		\big(
			(\negthinspace\Theta_{\min_N} \negthinspace \circ \Theta_{S_1}\negthinspace) \negthinspace
			\parallel \negthinspace {\scriptstyle \dots} \negthinspace \parallel \negthinspace
			(\negthinspace\Theta_{\min_N} \negthinspace \circ \Theta_{S_M}\negthinspace)
		\big) \negthinspace
	\circ 
		\Theta_{\ell}
\end{equation}
where

\begin{itemize}
	\item $\Theta_\ell \triangleq ((W_\ell, b_\ell))$;

	\item  each $\Theta_{S_j}$ has the form $\Theta_{S_j} = \big( S_j, 
\mathbf{0}_{M,1} \big)$; and

	\item $S_j = \left[ \begin{smallmatrix} {\llbracket I_N 
		\rrbracket_{\iota_1, 
		\cdot}}\negthickspace\negthickspace\negthickspace^{^{\scriptscriptstyle\text{T}}} 
		& \; \dots \; & {\llbracket I_N \rrbracket_{\iota_N, 
		\cdot}}\negthickspace\negthickspace\negthickspace^{^{\scriptscriptstyle\text{T}}} 
		\end{smallmatrix} \right]^\text{T}$ for some sequence $\iota_k \in \{1, 
		\dots, N\}$, where $I_N$ is the $(N \times N)$ identity matrix. 
\end{itemize}
The matrices $\Theta_\ell$ will be referred to as the \textbf{linear function 
matrices} of $\Xi_{N,M}$. The matrices $\{ S_j | j = 1, \dots, M\}$ will be 
referred to as the \textbf{selector matrices} of $\Xi_{N,M}$. Each set $s_j 
\triangleq \{ k \in \{1, \dots, N\} | \exists \iota \in \{1, \dots, N\}. 
\llbracket S_j \rrbracket_{\iota,k} = 1 \}$ is said to be the selector set of 
$S_j$.

A \textbf{multi-output TLL NN} with range space $\mathbb{R}^m$ is defined using 
$m$ equally sized scalar TLL NNs. That is we denote such a network by 
$\Xi^{(m)}_{N,M}$, with each output component denoted by $\Xi^{i}_{N,M}$, $i = 
1, \dots, m$.
\end{definition}

\section{Problem Formulation} 
\label{sec:problem_formulation}
The main problem we consider in this paper is one of TLL NN \emph{repair}. In 
brief, we take as a starting point a TLL NN controller that is ``mostly'' 
correct in the sense that is provably safe under a specific set of 
circumstances (states); here we assume that safety entails avoiding a 
\emph{particular, \underline{fixed} subset of the state space}. However, we 
further suppose that this TLL NN controller induces some additional, 
\emph{unsafe} behavior of \eqref{eq:system_dynamics} that is explicitly 
observed, such as from a more expansive application of a model checker; of 
course this unsafe behavior necessarily occurs in states not covered by the 
original safety guarantee. The repair problem, then, is to ``repair'' the given 
TLL controller so that this additional unsafe behavior is made safe, while 
simultaneously preserving the original safety guarantees associated with the 
network.

The basis for the problem in this paper is thus a TLL NN controller that has 
been designed (or trained) to control \eqref{eq:system_dynamics} in a 
\emph{safe way}. In particular, we use the following definition to fix our 
notion of ``unsafe'' behavior for \eqref{eq:system_dynamics}.
\begin{definition}[Unsafe Operation of \eqref{eq:system_dynamics}]
\label{def:unsafe_states}
	Let $G_u$ be an $(\mathsf{K}_u \times n)$ real-valued matrix, and let $h_u$ 
	be an $(\mathsf{K}_u \times 1)$ real vector, which together define a set of 
	\textbf{unsafe states} $X_\textsf{unsafe} \triangleq \{ x \in \mathbb{R}^n 
	| G_u x \geq h_u\}$.
\end{definition}
\noindent Then, we mean that a TLL NN controller is safe with respect to 
\eqref{eq:system_dynamics} and $X_\textsf{unsafe}$ in the following sense.
\begin{definition}[Safe TLL NN Controller]
\label{def:safe_controller}
	Let $X_\textsf{safe} \subset \mathbb{R}^n$ be a set of states
	such that $X_\textsf{safe} \cap X_\textsf{unsafe} = \emptyset$. Then a  TLL 
	NN controller $\mathfrak{u} \triangleq \nn\subarg{\Xi^{(m)}_{N,M}} : 
	\mathbb{R}^n \rightarrow \mathbb{R}^m$ is \textbf{safe} for 
	\eqref{eq:system_dynamics} \textbf{on horizon} $T$ (with respect to 
	$X_\textsf{safe}$ and $X_\textsf{unsafe}$) if:
	\begin{equation}
		\forall x_0 \negthinspace\in\negthinspace X_\textsf{safe} , i \negthinspace\in\negthinspace \{1, {\scriptstyle \dots} , T\} .
		\big(
		\zeta_i^{x_0}\negthinspace(\nn\subarg{ \Xi^{(m)}_{N,M} })
		\negthinspace \not\in \negthinspace
		X_\textsf{unsafe}
		\big).
	\end{equation}
	That is $\nn\subarg{\Xi^{(m)}_{N,M}}$ is safe (w.r.t. $X_\textsf{safe}$) if 
	all of its length-$T$ trajectories starting in $X_\textsf{safe}$ avoid the 
	unsafe states $X_\textsf{unsafe}$.
\end{definition}
\noindent The design of safe controllers in the sense of Definition 
\ref{def:safe_controller} has been considered in a number of contexts; see e.g. 
\cite{TranNNVNeuralNetwork2020}. Often this design procedure involves training 
the NN using data collected from an expert, and verifying the result using one 
of many available NN verifiers \cite{TranNNVNeuralNetwork2020}.

However, as noted above, we further suppose that a given TLL NN which is safe 
in the sense of Definition \ref{def:safe_controller} nevertheless has some 
\emph{unsafe} behavior for states that lie outside $X_\textsf{safe}$. In 
particular, we suppose that a model checker (for example) provides to us a 
\emph{counterexample} (or witness) to unsafe operation of 
\eqref{eq:system_dynamics}.

\begin{definition}[Counterexample to Safe Operation of \eqref{eq:system_dynamics}]
\label{def:counterexample}
	Let $X_\textsf{safe} \subset \mathbb{R}^n$, and let $\mathfrak{u} 
	\triangleq \nn\subarg{\Xi^{(m)}_{N,M}}$ be a TLL controller that is safe for 
	\eqref{eq:system_dynamics} on horizon $T$ w.r.t $X_\textsf{safe}$ and 
	$X_\textsf{unsafe}$. A \textbf{counter example to the safe operation of} 
	\eqref{eq:system_dynamics} is a state $x_\textsf{c.e.} \not\in 
	X_\textsf{safe}$ such that
	\begin{equation}
		f(x_\textsf{c.e.}) + g(x_\textsf{c.e.}) \cdot \mathfrak{u}(x_\textsf{c.e.}) = \zeta_1^{x_\textsf{c.e.}}(\mathfrak{u}) \in X_\textsf{unsafe}.
	\end{equation}
	That is starting \eqref{eq:system_dynamics} in $x_\textsf{c.e.}$ results in 
	an unsafe state in the next time step.
\end{definition}

We can now state the main problem of this paper.

\begin{problem}
\label{prob:main_problem}
	Let dynamics \eqref{eq:system_dynamics} be given, and assume its 
	trajectories are confined to compact subset of states, $X_\textsf{ws}$ (see 
	Definition \ref{def:workspace}). Also, let $X_\textsf{unsafe} \subset 
	X_\textsf{ws}$ be a specified set of unsafe states for 
	\eqref{eq:system_dynamics}, as in Definition \ref{def:unsafe_states}. 
	Furthermore, let $\mathfrak{u} = \nn\subarg{\Xi^{(m)}_{N,M}}$ be a TLL NN 
	controller for \eqref{eq:system_dynamics} that is safe on horizon $T$ with 
	respect to a set of states $X_\textsf{safe} \subset X_\textsf{ws}$ (see 
	Definition \ref{def:safe_controller}), and let $x_\textsf{c.e.}$ be a 
	counterexample to safety in the sense of Definition 
	\ref{def:counterexample}.

	Then the \textbf{TLL repair problem} is to obtain a new TLL controller 
	$\overbar{\mathfrak{u}} = \nn\subarg{\overbar{\Xi}^{(m)}_{N,M}}$ with the 
	following properties:

	\begin{enumerate}[label={(\itshape \roman*)}]
		\item $\overbar{\mathfrak{u}}$ is also safe on horizon $T$ with 
			respect to $X_\textsf{safe}$;

		\item the trajectory 
			$\zeta_1^{x_\textsf{c.e.}}(\overbar{\mathfrak{u}})$ is safe -- i.e. 
			the counterexample $x_\textsf{c.e.}$ is ``repaired'';

		\item $\overbar{\Xi}^{(m)}_{N,M}$ and ${\Xi}^{(m)}_{N,M}$ share a 
			common architecture (as implied by their identical architectural 
			parameters); and 

		\item the selector matrices of $\overbar{\Xi}^{(m)}_{N,M}$ and 
			${\Xi}^{(m)}_{N,M}$ are identical -- i.e. $\overbar{S}_k = S_k$ for 
			$k = 1, \dots, M$; and 

		\item $\lVert \overbar{W}_\ell - W_\ell \rVert_2 + \lVert 
			\overbar{b}_\ell - b_\ell \rVert_2$ is minimized.
	\end{enumerate}
\end{problem}

In particular, \emph{iii)}, \emph{iv)} and \emph{v)} justify the designation of 
this problem as one of ``repair''. That is the repair problem is to fix the 
counterexample while keeping the network as close as possible to the original 
network under consideration. \textbf{Note:} the formulation of Problem 
\ref{prob:main_problem} only allows repair by means of altering the 
\emph{linear layers} of $\Xi^{(m)}_{N,M}$; c.f. \emph{(iii)} and \emph{(iv)}.

\section{Framework}
\label{sec:framework}

The TLL NN repair problem described in Problem \ref{prob:main_problem} is 
challenging because it has two main objectives, which are at odds with each 
other. In particular, repairing a counterexample requires altering the NN's 
response in a \emph{local} region of the state space, but changing even a few 
neurons generally affects the \emph{global} response of the NN -- which could 
undo the initial safety guarantee supplied with the network. This tension is 
especially relevant for general deep NNs, and repairs realized on neurons in 
their latter layers. It is for this reason that we posed Problem 
\ref{prob:main_problem} in terms of TLL NNs: our approach will be to use the 
unique semantics of TLL NNs to balance the trade-offs between 
\textbf{\underline{local} NN alteration to repair the defective controller}  
and \textbf{\underline{global} NN alteration 
to ensure that the repaired controller activates at the counterexample}. 
Moreover, locally repairing the defective controller at $x_\textsf{c.e.}$ 
entails a further trade off between two competing objectives of its own: 
actually repairing the counterexample -- Problem 
\ref{prob:main_problem}\emph{(ii)} -- without causing a violation of the 
original safety guarantee for $X_\textsf{safe}$ -- i.e. Problem 
\ref{prob:main_problem}\emph{(i)}. Likewise, global alteration of the TLL to 
ensure correct activation of our repairs will entail its own trade-off: the 
alterations necessary to achieve the correct activation will also have to be 
made without sacrificing the safety guarantee for $X_\textsf{safe}$ -- i.e. 
Problem \ref{prob:main_problem}\emph{(i)}.

We devote the remainder of this section to two crucial subsections, one for 
each side of this local/global dichotomy. Our goal in these two subsections is 
to describe \textbf{constraints} on a TLL controller that are 
\textbf{sufficient} to ensure that it accomplishes the repair described in 
Problem \ref{prob:main_problem}. Thus, the results in this section should be 
seen as optimization constraints around which we can build our algorithm to 
solve Problem \ref{prob:main_problem}. The algorithmic details and formalism 
are presented in Section \ref{sec:main_algorithm}.

\subsection{Local TLL Repair} 
\label{sub:local_repair}

We first consider in isolation the problem of repairing the TLL controller in 
the vicinity of the counterexample $x_\textsf{c.e.}$, but under the assumption 
that the altered controller will remain the active there. The problem of 
actually guaranteeing that this is the case will be considered in the 
subsequent section. Thus, we proceed with the repair by \textbf{establishing 
constraints} on the alterations of those parameters in the TLL controller 
associated with the affine controller instantiated at and around the state 
$x_\textsf{c.e.}$. To be consistent with the 
literature, we will refer to any individual affine function instantiated by a 
NN as one of its \emph{local linear functions}.

\begin{definition}[Local Linear Function]
\label{def:local_linear_function}
	Let $\mathsf{f} : \mathbb{R}^n \rightarrow \mathbb{R}$ be CPWA. Then a 
	\textbf{local linear function of} $\mathsf{f}$ is a linear function $\ell : 
	\mathbb{R}^n \rightarrow \mathbb{R}$ if there exists an open set 
	$\mathfrak{O}$ such that $\ell(x) = \mathsf{f}(x)$ for all $x\in 
	\mathfrak{O}$.
\end{definition}

The unique semantics of TLL NNs makes them especially well suited to this local 
repair task because in a TLL NN, its local linear functions appear directly as 
neuronal parameters. In particular, all of the local linear functions of a TLL 
NN are described \emph{directly} by parameters in its linear layer; i.e. 
$\Theta_\ell = (W_\ell, b_\ell)$ for scalar TLL NNs or $\Theta^\kappa_\ell = 
(W^\kappa_\ell, b^\kappa_\ell)$ for the $\kappa^\text{th}$ output of a 
multi-output TLL (see Definition \ref{def:scalar_tll}). This follows as a 
corollary of the following relatively straightforward proposition, borrowed from 
\cite{FerlezBoundingComplexityFormally2020}:

\begin{proposition}[{\cite[Proposition 3]{FerlezBoundingComplexityFormally2020}}]
\label{prop:local_lin_fns_params}
	Let $\Xi_{N,M}$ be a scalar TLL NN with linear function matrices 
	$\Theta_\ell = (W_\ell, b_\ell) $. Then every local linear function of 
	$\nn\subarg{\Xi_{N,M}}$ is exactly equal to $\ell_i : x \mapsto \llbracket 
	W_\ell x + b_\ell \rrbracket_{i,\cdot}$ for some $i \in \{1, \dots, N\}$.

	Similarly, let $\Xi^{(m)}_{N,M}$ be a multi-output TLL, and let $\ell$ be 
	any local linear function of $\nn\subarg{\Xi^{(m)}_{N,M}}$. Then for each 
	$\kappa \in \{1, \dots, m\}$, the $\kappa^\text{th}$ component of $\ell$ 
	satisfies $\llbracket \ell \rrbracket_{\kappa, \cdot} = x \mapsto 
	\llbracket W^\kappa_\ell x + b^\kappa_\ell \rrbracket_{{i_\kappa},\cdot}$ 
	for some $i_\kappa \in \{1, \dots, N\}$.
\end{proposition}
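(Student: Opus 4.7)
The plan is to unwind the TLL semantics so that the full network output becomes an explicit $\max$-$\min$ of the candidate linear functions $\ell_i : x \mapsto \llbracket W_\ell x + b_\ell \rrbracket_{i,\cdot}$, and then to argue that any linear function which coincides with this $\max$-$\min$ on an open set must coincide globally with one of the $\ell_i$.

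First, I would combine Definition \ref{def:scalar_tll} with the specification of the $\min$/$\max$ networks (Definition \ref{def:n-element_minmax_NN}) to rewrite
\[
\nn\subarg{\Xi_{N,M}}(x) \;=\; \max_{j \in \{1,\dots,M\}} \; \min_{i \in s_j} \; \ell_i(x),
\]
where $s_j$ is the selector set of $S_j$ and $\ell_i(x) = \llbracket W_\ell x + b_\ell \rrbracket_{i,\cdot}$. This identification is direct from the architecture: $\Theta_\ell$ produces the $N$ affine outputs $\ell_1(x),\ldots,\ell_N(x)$, each $\Theta_{S_j}$ simply selects a sub-list indexed by $s_j$, and the $\Theta_{\min_N}$ and $\Theta_{\max_M}$ blocks then realize the stated nested $\max$-$\min$ expression.

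Next, suppose $\ell$ is a local linear function of $\nn\subarg{\Xi_{N,M}}$ with witnessing open set $\mathfrak{O}$. For each $x \in \mathfrak{O}$, the $\max$-$\min$ representation guarantees the existence of some index $i(x) \in \{1,\dots,N\}$ with $\ell_{i(x)}(x) = \nn\subarg{\Xi_{N,M}}(x) = \ell(x)$. Define
\[
A_i \;\triangleq\; \{\, x \in \mathfrak{O} \;:\; \ell_i(x) = \ell(x)\,\}, \quad i = 1,\dots,N.
\]
Because $\ell_i - \ell$ is affine, each $A_i$ is either all of $\mathfrak{O}$ or the intersection of $\mathfrak{O}$ with a proper affine subspace of $\mathbb{R}^n$ (hence of Lebesgue measure zero). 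Since $\bigcup_{i=1}^N A_i = \mathfrak{O}$ and $\mathfrak{O}$ is a non-empty open subset of $\mathbb{R}^n$, a finite union of proper affine subspaces cannot cover $\mathfrak{O}$, so at least one $A_i$ must equal $\mathfrak{O}$. Two affine functions that agree on a non-empty open set of $\mathbb{R}^n$ are identical everywhere, so $\ell \equiv \ell_i$ as claimed.

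Finally, for the multi-output case I would invoke Definition \ref{def:scalar_tll}, noting that $\Xi^{(m)}_{N,M}$ is defined componentwise from $m$ scalar TLLs with parameter matrices $(W^\kappa_\ell, b^\kappa_\ell)$. If $\ell$ is a local linear function of $\nn\subarg{\Xi^{(m)}_{N,M}}$ on an open set $\mathfrak{O}$, then each component $\llbracket \ell \rrbracket_{\kappa,\cdot}$ is a local linear function of the corresponding scalar TLL on the same $\mathfrak{O}$, and the scalar result yields the required index $i_\kappa$. The only slightly delicate step is the covering argument in the scalar case; everything else is direct from the architectural definitions and elementary properties of affine maps.
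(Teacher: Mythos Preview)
Your argument is correct. The $\max$--$\min$ unwinding follows directly from Definition~\ref{def:scalar_tll} and Definition~\ref{def:n-element_minmax_NN}, and the covering step is sound: each $A_i$ is the intersection of $\mathfrak{O}$ with the zero set of the affine map $\ell_i - \ell$, which is either all of $\mathbb{R}^n$, a hyperplane, or empty, so a finite union of the non-trivial ones cannot exhaust an open set. The multi-output extension is immediate as you describe.

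As for comparison: the paper does not actually supply a proof of this proposition. It is quoted verbatim from \cite[Proposition~3]{FerlezBoundingComplexityFormally2020} and used as a black box (the appendix proves only Corollary~\ref{cor:active_local_linear_fn}, Propositions~\ref{prop:reach_set_bound}--\ref{prop:activation_constraint}, and Corollary~\ref{cor:repaired_tll_reach_set_bound}). Your write-up therefore fills a gap that the present paper leaves to the cited reference, and does so by what is essentially the natural route: reduce to the explicit lattice form and use that finitely many affine hyperplanes cannot cover an open set.
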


\begin{corollary}
\label{cor:active_local_linear_fn}
	Let $\Xi^{(m)}_{N,M}$ be a TLL over domain $\mathbb{R}^n$, and let 
	$x_\textsf{c.e.} \in \mathbb{R}^n$. Then there exist $m$ integers 
	$\text{act}_k \in \{1, \dots, N\}$ for $k = 1, \dots, m$ and a closed, 
	connected set with non-empty interior, $R_\text{a} \subset \mathbb{R}^n$ 
	such that

	\begin{itemize}
		\item $x_\textsf{c.e.} \in R_\text{a}$; and 

		\item $\llbracket \nn\subarg{\Xi^{(m)}_{N,M}} \rrbracket_{k} = x 
			\mapsto \llbracket W^k_\ell x + b^k \rrbracket_{\text{act}_k}$ on 
			the set $R_\text{a}$.
	\end{itemize}
\end{corollary}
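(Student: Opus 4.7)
The plan is to derive this corollary directly from Proposition 1 by constructing $R_\text{a}$ as a cell in a suitable polyhedral refinement of $\mathbb{R}^n$. First, I would recall that any scalar TLL NN $\nn\subarg{\Xi_{N,M}}$ computes a max-of-mins over the affine functions $\ell_i : x \mapsto \llbracket W_\ell x + b_\ell \rrbracket_{i,\cdot}$, and so is continuous and piecewise affine (CPWA). Consequently, there is an associated polyhedral subdivision of $\mathbb{R}^n$ on which $\nn\subarg{\Xi_{N,M}}$ is affine on each closed cell. For a multi-output TLL $\Xi^{(m)}_{N,M}$, each output $k$ carries its own such subdivision $\mathcal{R}_k$; I would then form the common refinement $\mathcal{R} \triangleq \{R_1 \cap \dots \cap R_m : R_k \in \mathcal{R}_k\}$. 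Every top-dimensional (i.e., non-empty-interior) cell of $\mathcal{R}$ is a closed convex polyhedron on which \emph{every} output component of $\nn\subarg{\Xi^{(m)}_{N,M}}$ is simultaneously affine.

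Next, I would choose $R_\text{a}$ to be a top-dimensional cell of $\mathcal{R}$ whose closure contains $x_\textsf{c.e.}$. Such a cell exists because the union of the top-dimensional cells of $\mathcal{R}$ is dense in $\mathbb{R}^n$: the cells of lower dimension are contained in the finite union of hyperplanes defining the arrangement, a closed set of empty interior. Hence some top-dimensional cell $R_\text{a} \in \mathcal{R}$ has $x_\textsf{c.e.}$ in its closure; since the cell is closed by construction, $x_\textsf{c.e.} \in R_\text{a}$. By convexity, $R_\text{a}$ is connected and has non-empty interior, satisfying the first bullet of the corollary.

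Finally, I would apply Proposition 1 componentwise on $R_\text{a}$. For each output $k$, the restriction of $\llbracket \nn\subarg{\Xi^{(m)}_{N,M}} \rrbracket_k$ to the interior of $R_\text{a}$ is a single affine map that agrees with $\nn\subarg{\Xi^{(m)}_{N,M}}$ on the open set $\text{int}(R_\text{a})$. Proposition 1 then guarantees this affine map is exactly $x \mapsto \llbracket W^k_\ell x + b^k_\ell \rrbracket_{i_k,\cdot}$ for some $i_k \in \{1,\dots,N\}$, and by continuity the same equality extends from $\text{int}(R_\text{a})$ to all of $R_\text{a}$. Setting $\text{act}_k \triangleq i_k$ for each $k$ yields the required tuple, completing the proof.

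The main subtlety I expect to handle carefully is the case where $x_\textsf{c.e.}$ lies on the boundary of several cells of $\mathcal{R}$ simultaneously; one must argue that even in this degenerate case, at least one top-dimensional cell has $x_\textsf{c.e.}$ in its closure, which is exactly the density argument above. Once that is in place, Proposition 1 does all the algebraic work of identifying the active linear index $\text{act}_k$.
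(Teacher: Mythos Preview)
Your proposal is correct and follows essentially the same route as the paper: both arguments locate an open region on which the TLL is affine and whose closure contains $x_\textsf{c.e.}$, invoke Proposition~\ref{prop:local_lin_fns_params} on that region to extract the indices $\text{act}_k$, and then extend the identity to the closure by continuity. Your construction via the common polyhedral refinement of the per-output subdivisions is somewhat more explicit than the paper's bare contradiction-with-CPWA existence argument, but the logical skeleton---including the handling of the boundary case you flag---is identical.
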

\noindent Corollary \ref{cor:active_local_linear_fn} is actually a strong 
statement: it indicates that in a TLL, each local linear function is described 
directly by \emph{its own} linear-function-layer parameters and those 
parameters describe \emph{only} that local linear function.

Thus, as a consequence of Corollary \ref{cor:active_local_linear_fn}, 
``repairing'' the problematic local controller (local linear function) of the 
TLL controller in Problem \ref{prob:main_problem} involves the following steps:
\begin{enumerate}
	\item identify which of the local linear functions is realized by the 
		TLL controller at $x_\textsf{c.e.}$ -- i.e. identifying the indices of 
		the active local linear function at $x_\textsf{c.e.}$ viz. indices 
		$\text{act}_\kappa \in \{1, \dots, N\}$ for each output $\kappa$ as in 
		Corollary \ref{cor:active_local_linear_fn};

	\item \underline{\emph{establish constraints}} on the parameters of that 
		local linear function so as to ensure repair of the counterexample; 
		i.e. altering the elements of the rows $\llbracket W_\ell^\kappa 
		\rrbracket_{\text{act}_\kappa, \cdot}$ and $\llbracket b_\ell^\kappa 
		\rrbracket_{\text{act}_\kappa}$ for each output $\kappa$ such that the 
		resulting linear controller repairs the counterexample as in Problem 
		\ref{prob:main_problem}\emph{(ii)}; and 

	\item \underline{\emph{establish constraints}} to ensure the repaired 
		parameters do not induce a violation of the safety constraint for the 
		guaranteed set of safe states, $X_\textsf{safe}$, as 
		in Problem \ref{prob:main_problem}\emph{(i)}.
\end{enumerate}
We consider these three steps in sequence as follows.

\subsubsection{Identifying the Active Controller at $x_\textsf{c.e.}$} 
\label{ssub:identifying_the_active_controller_at_}


From Corollary \ref{cor:active_local_linear_fn}, \emph{all} of the possible 
linear controllers that a TLL controller realizes are exposed directly in the 
parameters of its linear layer matrices, $\Theta_\ell^\kappa$. Crucially for 
the repair problem, once the active controller at $x_\textsf{c.e.}$ has been 
identified, the TLL parameters responsible for that controller immediately 
evident. This is the starting point for our repair process.

Since a TLL consists of two levels of lattice operations, it is straightforward 
to identify which of these affine functions is in fact active at 
$x_\textsf{c.e.}$; for a given output, $\kappa$, this is can be done by 
evaluating $W_\ell^\kappa x_\textsf{c.e.} + b_\ell^\kappa$ and comparing the 
components thereof according to the selector sets associated with the TLL 
controller. That is the index of the active controller for output $\kappa$, 
denoted by $\text{act}_\kappa$, is determined by the following two expressions:
\begin{align}
	\mu^\kappa_k &\triangleq \arg \min_{i \in S^\kappa_k} \llbracket W^\kappa_\ell x_\textsf{c.e.} + b^\kappa_\ell \rrbracket_{i} \\
	\text{act}_\kappa &\triangleq \hspace{10pt} \arg \hspace{-22pt} \max_{j \in \{\mu^\kappa_k | k = 1, \dots, M\}} \llbracket W^\kappa_\ell x_\textsf{c.e.} + b^\kappa_\ell \rrbracket_{j}
\end{align}
These expressions mirror the computations that define a TLL network, as 
described in Definition \ref{def:scalar_tll}; the only difference is that 
$\max$ and $\min$ are replaced by $\arg \max$ and $\arg \min$, respectively, so 
as to retrieve the index of interest instead of the network's 
output.


\subsubsection{Repairing the Affine Controller at $x_\textsf{c.e.}$} 
\label{ssub:repairing_the_affine_controller_at_}

Given the result of Corollary \ref{cor:active_local_linear_fn}, the parameters 
of the network that result in a problematic controller at $x_\textsf{c.e.}$ are 
readily apparent.  Moreover, since these parameters are obviously in the linear 
layer of the original TLL, they are alterable under the requirement in Problem 
\ref{prob:main_problem} that only linear-layer parameters are permitted to be 
used for repair. Thus, in the current context, local repair entails simply 
correcting the elements of the matrices $\llbracket W^k_\ell 
\rrbracket_{\text{act}_k}$ and $\llbracket b^k_\ell \rrbracket_{\text{act}_k}$. 
It is thus clear that a ``repaired'' controller should satisfy
\begin{equation}
\label{eq:repair_constraint}
	f(x_\textsf{c.e.}) + g(x_\textsf{c.e.})
	\left[
	\begin{smallmatrix}
		\llbracket W^1_\ell x_\textsf{c.e.} + b^1_\ell \rrbracket_{\text{act}_1} \\
		\vdots \\
		\llbracket W^m_\ell x_\textsf{c.e.} + b^m_\ell \rrbracket_{\text{act}_m} 
	\end{smallmatrix}
	\right]
	\not\in
	X_\textsf{unsafe}.
\end{equation}

Then \eqref{eq:repair_constraint} represents a \emph{linear constraint} in the 
local controller to be repaired, and this constraint imposes the repair 
property in Problem \ref{prob:main_problem}\emph{(ii)}. That is provided that 
the repaired controller described by $\{\text{act}_\kappa\}$ \emph{remains 
active} at the counterexample; as noted, we consider this problem in the global 
stasis condition subsequently.

\subsubsection{Preserving the Initial Safety Condition with the Repaired Controller} 
\label{ssub:preserving_the_initial_safety_condition}
One unique aspect of the TLL NN architecture is that affine functions defined 
in its linear layer can be \emph{reused} across regions of its input space. In 
particular, the controller associated with the parameters we repaired in the 
previous step -- i.e. the indices $\{\text{act}_\kappa\}$ of the linear layer 
matrices -- may likewise be activated in or around $X_\textsf{safe}$. The fact 
that we \emph{altered} these controller parameters thus means that trajectories 
emanating from $X_\textsf{safe}$ may be affected in turn by our repair efforts: 
that is the repairs we made to the controller to address Problem 
\ref{prob:main_problem}\emph{(ii)} may simultaneously alter the TLL in a way 
that \textbf{undoes} the requirement in Problem 
\ref{prob:main_problem}\emph{(i)} -- i.e. the initial safety guarantee on 
$X_\textsf{safe}$ and $\nn\subarg{\Xi^{(m)}_{N,M}}$. Thus, local repair of the 
problematic controller must account for this safety property, too.

We accomplish this by bounding the reach set of \eqref{eq:system_dynamics} for  
initial conditions in $X_\textsf{safe}$, and for this we employ the usual 
strategy of bounding the relevant Lipschitz constants. Naturally, since the TLL 
controller is a CPWA controller operated in closed loop, these bounds will also 
incorporate the size of the TLL controller parameters $\lVert \llbracket 
W^\kappa_\ell \rrbracket_i \rVert$ and $\lVert \llbracket b^\kappa_\ell 
\rrbracket_i \rVert$ for $\kappa \in \{1, \dots, m\}$ and $i \in \{1, \dots, N 
\}$.

In general, however, we have the following proposition.
\begin{proposition}
\label{prop:reach_set_bound}
	Consider system dynamics \eqref{eq:system_dynamics}, and suppose that the 
	state $x$ is confined to known compact workspace, $X_\textsf{ws}$ (see 
	Definition \ref{def:workspace}). Also, let $T$ be the integer time horizon 
	from Definition \ref{def:safe_controller}. Finally, assume that a 
	closed-loop CPWA $\Psi : \mathbb{R}^n \rightarrow \mathbb{R}^m$ is applied 
	to \eqref{eq:system_dynamics}, and that $\Psi$ has local linear functions 
	$\mathcal{L}_\Psi = \{ x \mapsto w_k x + b_k | k = 1, \dots, N\}$.

	Moreover, define the function $\beta$ as 
	\begin{align}
		&\beta(\lVert w \rVert, \lVert b \rVert) \triangleq 
			\sup_{x_0 \in X_\textsf{safe}} 
			\Big(
			\lVert f(x_0) - x_0 \rVert + \notag \\
			&\qquad\quad \lVert g(x_0) \rVert \cdot \lVert w \rVert \cdot \text{ext}(X_\textsf{ws})
			+ \lVert g(x_0) \rVert \cdot \lVert b \rVert
			\Big)
	\end{align}
	and in turn define
	\begin{equation}
		\beta_\text{max}(\Psi) \triangleq
			\beta\big(
			\max_{
				w \in \{w_k | k = 1, \dots, N\}
			}
			\lVert w \rVert,
			\max_{
				b \in \{b_k | k = 1, \dots, N\}
			}
			\lVert b \rVert
			\big).
	\end{equation}
	Finally, define the function $L$ as in \eqref{eq:L},
		\begin{figure*}[!b]
			\hrulefill
			\normalsize
			\setcounter{MYtempeqncnt}{\value{equation}}
			\setcounter{equation}{10}
			\begin{equation}\label{eq:L}
				L(\lVert w \rVert, \lVert b \rVert) \triangleq 
						L_f + 
							L_g 
							\cdot
							\sup_{x_0 \in X_\textsf{safe}} 
							\lVert w \rVert \cdot \lVert x_0 \rVert
							+ 
							\sup_{x_0 \in X_\textsf{safe}} 
							\lVert w \rVert \cdot \lVert g(x_0) \rVert
						+
							L_g \cdot 
							\lVert b \rVert
			\end{equation}
			\setcounter{equation}{\value{MYtempeqncnt}}
			\addtocounter{equation}{1}
			\vspace*{4pt}
		\end{figure*}
	and in turn define
	\begin{equation}
		L_\text{max}(\Psi) \triangleq L\big( 
				\max_{
				w \in \{w_k | k = 1, \dots, N\}
				}
				\lVert w \rVert,
				\max_{
				b \in \{b_k | k = 1, \dots, N\}
				}
				\lVert b \rVert
			\big).
	\end{equation}

	Then for all $x_0 \in X_\textsf{safe}$, $i \in \{1, \dots, T\}$, we have:
	\begin{equation}\label{eq:main_bound_prop_conclusion}
			\lVert \zeta_T^{x_0}(\Psi) - x_0 \lVert
			\leq
			\beta_\text{max}(\Psi) \cdot \sum_{k=0}^T   { L_\text{max}(\Psi) }^k.
	\end{equation}
\end{proposition}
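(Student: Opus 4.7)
The plan is to establish \eqref{eq:main_bound_prop_conclusion} via a telescoping argument built on two ingredients: a uniform one-step displacement bound of magnitude $\beta_\text{max}(\Psi)$, and a uniform Lipschitz estimate $L_\text{max}(\Psi)$ on the closed-loop map $\Phi : x \mapsto f(x) + g(x)\Psi(x)$.

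First I would derive the one-step displacement bound. For any $x_0 \in X_\textsf{safe}$, the controller evaluates to $\Psi(x_0) = w_k x_0 + b_k$ for some $k \in \{1,\dots,N\}$, so
\begin{equation*}
\zeta_1^{x_0}(\Psi) - x_0 = \bigl(f(x_0) - x_0\bigr) + g(x_0)\bigl(w_k x_0 + b_k\bigr).
\end{equation*}
Applying the triangle inequality, using $\lVert x_0 \rVert \leq \text{ext}(X_\textsf{ws})$ (since $X_\textsf{safe} \subset X_\textsf{ws}$), taking the supremum over $x_0 \in X_\textsf{safe}$, and replacing $\lVert w_k \rVert$ and $\lVert b_k \rVert$ by their maxima over $\mathcal{L}_\Psi$ recovers precisely the definition of $\beta_\text{max}(\Psi)$, hence $\lVert \zeta_1^{x_0}(\Psi) - x_0 \rVert \leq \beta_\text{max}(\Psi)$.

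Second, I would argue that $\Phi$ is globally $L_\text{max}(\Psi)$-Lipschitz over $X_\textsf{ws}$. For a single linear piece $(w_k, b_k)$, the map $F_k : x \mapsto f(x) + g(x)(w_k x + b_k)$ satisfies
\begin{equation*}
F_k(x) - F_k(y) = \bigl(f(x) - f(y)\bigr) + g(x) w_k (x-y) + \bigl(g(x) - g(y)\bigr)(w_k y + b_k),
\end{equation*}
and applying the Lipschitz constants $L_f$ and $L_g$ of $f$ and $g$, together with the workspace bounds on $\lVert y \rVert$ and $\lVert g(y) \rVert$, produces a local Lipschitz constant matching $L(\lVert w_k \rVert, \lVert b_k \rVert)$ as defined in \eqref{eq:L}. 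Since $\Psi$ is a continuous CPWA function, $\Phi$ is continuous and agrees with some $F_k$ on each polyhedral region, and a triangle-inequality argument along any line segment connecting $x, y \in X_\textsf{ws}$ (which crosses only finitely many linear-region boundaries) upgrades this to a global Lipschitz constant of $L_\text{max}(\Psi)$.

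Third, I would telescope. Writing $\zeta_T^{x_0}(\Psi) - x_0 = \sum_{i=0}^{T-1}\bigl(\zeta_{i+1}^{x_0}(\Psi) - \zeta_i^{x_0}(\Psi)\bigr)$ and using $\zeta_{i+1} - \zeta_i = \Phi(\zeta_i) - \Phi(\zeta_{i-1})$ for $i \geq 1$, the Lipschitz bound iterated from the one-step estimate gives $\lVert \zeta_{i+1} - \zeta_i \rVert \leq L_\text{max}(\Psi)^i \cdot \beta_\text{max}(\Psi)$, and summing the resulting geometric series yields \eqref{eq:main_bound_prop_conclusion}. The main obstacle is the Lipschitz estimate for $\Phi$: because $\Psi$ is only continuous and piecewise linear, a mean-value argument is not directly available, and the delicate point is that a continuous function which is piecewise Lipschitz with a common constant on a finite polyhedral subdivision of a convex workspace is globally Lipschitz with that same constant — this is precisely what permits the local Lipschitz constants to be replaced by their uniform maximum $L_\text{max}(\Psi)$.
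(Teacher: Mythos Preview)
Your proposal is correct and follows essentially the same strategy as the paper: telescope $\zeta_T^{x_0}(\Psi) - x_0$ into one-step increments, establish the recursive estimate $\lVert \zeta_{i+1} - \zeta_i \rVert \leq L_\text{max}(\Psi)\,\lVert \zeta_i - \zeta_{i-1} \rVert$ from a Lipschitz bound on the closed-loop map, and anchor the recursion with the base-case bound $\lVert \zeta_1^{x_0}(\Psi) - x_0 \rVert \leq \beta_\text{max}(\Psi)$. The only packaging difference is that the paper isolates the product estimate $\lVert g(x)u(x) - g(x')u(x') \rVert \leq (L_g \sup\lVert u \rVert + L_u \sup\lVert g \rVert)\,\lVert x - x' \rVert$ as a standalone lemma and applies it directly to the closed-loop map (treating $x \mapsto w(x)x$ and $x \mapsto b(x)$ as the relevant factors), whereas you compute the Lipschitz constant of each affine branch $F_k$ by an explicit add-and-subtract and then upgrade from piecewise to global Lipschitz via continuity of $\Psi$ along segments --- your handling of this last point is, if anything, more careful than the paper's, which does not explicitly justify why the piecewise bounds globalize.
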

\noindent The proof of Proposition \ref{prop:reach_set_bound} is in Appendix \ref{sec:appendix} of \cite{SantaCruzTechReport}.

Proposition \ref{prop:reach_set_bound} bounds the size of the reach set for 
\eqref{eq:system_dynamics} in terms of an arbitrary CPWA controller, $\Psi$, 
when the system is started from $X_\textsf{safe}$. This proposition is 
naturally applied in order to find bounds for safety with respect to the 
\emph{unsafe} region $X_\textsf{unsafe}$ as follows.

\begin{proposition}
\label{prop:reach_set_bound_safe}
	Let $T$, $X_\textsf{ws}$, $\Psi$ and $\mathcal{L}_\Psi$ be as in 
	Proposition \ref{prop:reach_set_bound}, and let $\beta_\text{max}$ and 
	$L_\text{max}$ be two constants s.t. for all $\delta x 
	\in \mathbb{R}^n$
	\begin{multline}\label{eq:safe_reach_constants}
		\lVert \delta x \rVert \leq \beta_\text{max} \cdot \sum_{k=0}^T {L_\text{max}}^k \\
		\implies \forall x_0 \in X_\textsf{safe} . \big( x_0 + \delta x \not\in X_\textsf{unsafe} \big)
	\end{multline}

	If $\beta_\text{max}(\Psi) \leq \beta_\text{max}$ and $L_\text{max}(\Psi) 
	\leq L_\text{max}$, then trajectories of \eqref{eq:system_dynamics} under 
	closed loop controller $\Psi$ are safe in the sense that
	\begin{equation}
		\forall x_0 \in X_\textsf{safe} \forall i \in \{1, \dots, T\} \;.\;
		\zeta^{x_0}_t(\Psi) \not\in X_\textsf{unsafe}.
	\end{equation}
\end{proposition}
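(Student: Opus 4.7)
The plan is to reduce this proposition to a direct application of Proposition \ref{prop:reach_set_bound}, followed by an appeal to the hypothesized implication on $\delta x$. Since the conclusion is a straightforward safety statement about trajectories, the entire argument should be a short chain of inequalities with no additional technical machinery.

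First, I would fix an arbitrary $x_0 \in X_\textsf{safe}$ and an arbitrary $i \in \{1, \dots, T\}$, and invoke Proposition \ref{prop:reach_set_bound} to obtain
\begin{equation*}
\lVert \zeta_i^{x_0}(\Psi) - x_0 \rVert \leq \beta_\text{max}(\Psi) \cdot \sum_{k=0}^{T} L_\text{max}(\Psi)^k.
\end{equation*}
Next, I would use the standing hypotheses $\beta_\text{max}(\Psi) \leq \beta_\text{max}$ and $L_\text{max}(\Psi) \leq L_\text{max}$ to replace the $\Psi$-dependent quantities on the right-hand side by the given uniform constants. This step requires the elementary observation that $\beta_\text{max}(\Psi) \geq 0$ and $L_\text{max}(\Psi) \geq 0$ (both are suprema of non-negative quantities by construction in Proposition \ref{prop:reach_set_bound}), so that the geometric-style sum $\sum_{k=0}^{T} x^k$ is non-decreasing in $x \geq 0$, making the substitution monotone and yielding
\begin{equation*}
\lVert \zeta_i^{x_0}(\Psi) - x_0 \rVert \leq \beta_\text{max} \cdot \sum_{k=0}^{T} L_\text{max}^k.
\end{equation*}

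With this inequality in hand, I would set $\delta x \triangleq \zeta_i^{x_0}(\Psi) - x_0$ and invoke the hypothesized implication \eqref{eq:safe_reach_constants} of the proposition. Because $\lVert \delta x \rVert$ satisfies the antecedent of that implication, the conclusion is $x_0 + \delta x \not\in X_\textsf{unsafe}$, which is exactly $\zeta_i^{x_0}(\Psi) \not\in X_\textsf{unsafe}$. Since $x_0 \in X_\textsf{safe}$ and $i \in \{1, \dots, T\}$ were arbitrary, this establishes the safety statement.

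There is no substantive obstacle here; the argument is essentially a book-keeping exercise. The only care needed is the monotonicity step, which relies on recognizing that both $\beta_\text{max}(\Psi)$ and $L_\text{max}(\Psi)$ are manifestly non-negative from their definitions in Proposition \ref{prop:reach_set_bound}. Everything else is a direct application of a previously established bound followed by the proposition's own hypothesis.
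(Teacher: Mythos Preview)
Your proposal is correct and follows essentially the same approach as the paper's own proof: apply Proposition~\ref{prop:reach_set_bound}, use the hypotheses $\beta_\text{max}(\Psi)\leq\beta_\text{max}$ and $L_\text{max}(\Psi)\leq L_\text{max}$ to upgrade the bound, and then set $\delta x = \zeta_i^{x_0}(\Psi)-x_0$ to trigger implication~\eqref{eq:safe_reach_constants}. If anything, you are slightly more careful than the paper in explicitly justifying the monotonicity of the substitution and in quantifying over all $i\in\{1,\dots,T\}$ rather than just $T$.
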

The proof of is a more or less straightforward application of Proposition 
\ref{prop:reach_set_bound}, and so can be found in Appendix \ref{sec:appendix} of \cite{SantaCruzTechReport}.

In particular, Proposition \ref{prop:reach_set_bound_safe} states that if we 
find constants $\beta_\text{max}$ and $L_\text{max}$ that satisfy 
\eqref{eq:safe_reach_constants}, then we have a way to bound the parameters of 
any CPWA controller (via $\beta$ and $L$) so that that controller is safe in 
closed loop. This translates to conditions that our 
repaired controller must satisfy in order to preserve the safety property 
required in Problem \ref{prob:main_problem}\emph{(i)}.

Formally, this entails particularizing Proposition 
\ref{prop:reach_set_bound} and \ref{prop:reach_set_bound_safe} to 
the TLL controllers associated with the repair problem.
\begin{corollary}
\label{cor:repaired_tll_reach_set_bound}
	Again consider system \eqref{eq:system_dynamics} confined to workspace 
	$X_\textsf{ws}$ as before.  Also, let $\beta_\text{max}$ and 
	$L_\text{max}$ be such that they satisfy the assumptions of Proposition 
	\ref{prop:reach_set_bound_safe}, viz. \eqref{eq:safe_reach_constants}. 

	Now, let $\Xi^{(m)}_{N,M}$ be the TLL controller as given in Problem 
	\ref{prob:main_problem}, and let $\Theta_{\ell}^\kappa = 
	(W^\kappa_\ell,b^\kappa_\ell)$ be its linear layer matrices for outputs 
	$\kappa = 1, \dots, m$ as usual. For this controller, define the following 
	two quantities:
	\begin{align}
		\Omega_W &\triangleq \max_{
				w \in \cup_{\kappa = 1}^{m} \{ \llbracket W^\kappa_\ell \rrbracket_j | j = 1, \dots, N\}
			}
			\lVert w \rVert \\
		\Omega_b &\triangleq \max_{
				b \in \cup_{\kappa = 1}^{m} \{ \llbracket b^\kappa_\ell \rrbracket_j | j = 1, \dots, N\}
			}
			\lVert b \rVert
	\end{align}
	so that $\beta_\text{max}(\Xi^{(m)}_{N,M}) = \beta(\Omega_W, \Omega_b)$ and 
	$L_\text{max}(\Xi^{(m)}_{N,M}) = L(\Omega_W, \Omega_b)$. Finally, let 
	indices $\{\text{act}_\kappa\}_{\kappa=1}^m$ specify the active local 
	linear functions of $\Xi^{(m)}_{N,M}$ that are to be repaired, as described 
	in Subsection \ref{ssub:identifying_the_active_controller_at_} and 
	\ref{ssub:repairing_the_affine_controller_at_}. Let 
	$\overbar{w}_{\text{act}_\kappa}^\kappa$ and 
	$\overbar{b}_{\text{act}_\kappa}^\kappa$ be any repaired values of 
	$\llbracket W^\kappa_\ell \rrbracket_{\text{act}_\kappa, \cdot}$ and 
	$\llbracket b^\kappa_\ell \rrbracket_{\text{act}_\kappa}$, respectively.

	If the following four conditions are satisfied
	\begin{align}
		\beta(\lVert 
				\overbar{w}_{\text{act}_\kappa}^\kappa \rVert, \lVert 
				\overbar{b}_{\text{act}_\kappa}^\kappa \rVert) 
		&\leq 
		\beta_\text{max} \label{eq:repair_beta_constraint}\\
		\beta_\text{max}(\Xi^{(m)}_{N,M})
		&\leq
		\beta_\text{max} \label{eq:original_beta_constraint} \\
		L(\lVert 
				\overbar{w}_{\text{act}_\kappa}^\kappa \rVert, \lVert 
				\overbar{b}_{\text{act}_\kappa}^\kappa \rVert)
		&\leq L_\text{max} \label{eq:repair_L_constraint} \\
		L_\text{max}(\Xi^{(m)}_{N,M}) &\leq L_\text{max}
		\label{eq:original_L_constraint}
	\end{align}
	then the following hold for all $x_0 \in X_\textsf{safe}$:
	\begin{equation}
		\lVert \zeta_T^{x_0}(\overbar{\Xi}^{(m)}_{N,M}) - x_0 \lVert
			\leq
			\beta_\text{max} \cdot \sum_{k=0}^T   { L_\text{max} }^k 
			\label{eq:cor_bounded_trajectories}
	\end{equation}
	and hence
	\begin{equation}
		\forall i \in \{1, \dots, T\} \;.\; \zeta_i^{x_0}(\overbar{\Xi}^{(m)}_{N,M}) \not\in X_\textsf{unsafe}.
			\label{eq:cor_safe_trajectories}
	\end{equation}
\end{corollary}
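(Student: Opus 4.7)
The corollary is essentially a specialization of Proposition \ref{prop:reach_set_bound_safe} to the repaired TLL controller $\overbar{\Xi}^{(m)}_{N,M}$. I would therefore verify that the hypotheses of Proposition \ref{prop:reach_set_bound_safe} hold for $\overbar{\Xi}^{(m)}_{N,M}$ with the given constants $\beta_\text{max}$ and $L_\text{max}$, and then invoke that proposition together with Proposition \ref{prop:reach_set_bound} to obtain both the reach-set bound \eqref{eq:cor_bounded_trajectories} and the safety conclusion \eqref{eq:cor_safe_trajectories}.

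First, I would identify the local linear functions of the repaired network. By Proposition \ref{prop:local_lin_fns_params}, every local linear function of a multi-output TLL is specified, component by component, by a row of its linear-layer matrices. Since the repair only alters the rows indexed by $\text{act}_\kappa$ for each output $\kappa$, the collection of linear-layer row pairs defining the local linear functions of $\overbar{\Xi}^{(m)}_{N,M}$ consists of (a) the unchanged rows $(\llbracket W^\kappa_\ell \rrbracket_{i,\cdot}, \llbracket b^\kappa_\ell \rrbracket_i)$ for $i \neq \text{act}_\kappa$, whose norms are majorized by $\Omega_W$ and $\Omega_b$, respectively; and (b) the repaired rows $(\overbar{w}^\kappa_{\text{act}_\kappa}, \overbar{b}^\kappa_{\text{act}_\kappa})$.

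Second, I would bound $\beta_\text{max}(\overbar{\Xi}^{(m)}_{N,M})$ and $L_\text{max}(\overbar{\Xi}^{(m)}_{N,M})$. A quick inspection of the defining formulas shows that both $\beta$ and $L$ are affine functions of $\lVert w \rVert$ and $\lVert b \rVert$ with nonnegative coefficients, hence nondecreasing in each scalar argument. It follows that the maximum weight-row norm taken over the repaired network is at most $\max(\Omega_W, \max_\kappa \lVert \overbar{w}^\kappa_{\text{act}_\kappa} \rVert)$, with the analogous bound for biases. Combining this observation with hypotheses \eqref{eq:repair_beta_constraint}--\eqref{eq:original_L_constraint} delivers $\beta_\text{max}(\overbar{\Xi}^{(m)}_{N,M}) \leq \beta_\text{max}$ and $L_\text{max}(\overbar{\Xi}^{(m)}_{N,M}) \leq L_\text{max}$, after which Proposition \ref{prop:reach_set_bound_safe} applied with $\Psi = \overbar{\Xi}^{(m)}_{N,M}$ immediately yields both \eqref{eq:cor_bounded_trajectories} and \eqref{eq:cor_safe_trajectories}.

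The main obstacle is the bookkeeping in the second step. Because $\beta_\text{max}(\cdot)$ and $L_\text{max}(\cdot)$ are defined using \emph{independent} maxima over weight and bias norms, one has to handle the case in which the largest weight norm and the largest bias norm in the repaired network belong to different local linear functions; the separate monotonicity of $\beta$ and $L$ in each argument is precisely what makes this bookkeeping go through, and it is the only place where the specific form of $\beta$ and $L$ is used.
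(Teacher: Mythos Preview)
Your proposal is correct and follows essentially the same route as the paper: the paper's proof is a one-sentence remark that the corollary is a particularization of Proposition~\ref{prop:reach_set_bound_safe} to the repaired network $\overbar{\Xi}^{(m)}_{N,M}$, with the four hypotheses \eqref{eq:repair_beta_constraint}--\eqref{eq:original_L_constraint} covering the original and repaired parameters separately. Your write-up in fact supplies more detail than the paper does, since the monotonicity-of-$\beta$-and-$L$ bookkeeping you flag is exactly what is needed to combine the ``original'' and ``repaired'' bounds into a single bound on $\beta_\text{max}(\overbar{\Xi}^{(m)}_{N,M})$ and $L_\text{max}(\overbar{\Xi}^{(m)}_{N,M})$, and the paper leaves this implicit.
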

The proof of Corollary \ref{cor:repaired_tll_reach_set_bound} 
is in Appendix \ref{sec:appendix} of \cite{SantaCruzTechReport}.

The conclusion \eqref{eq:cor_bounded_trajectories} of Corollary 
\ref{cor:repaired_tll_reach_set_bound} should be interpreted as follows: the 
bound on the reach set of the repaired controller, $\overbar{\Xi}^{(m)}_{N,M}$, 
is no worse than the bound on the reach set of the original TLL controller 
given in Problem \ref{prob:main_problem}. Hence, by the assumptions borrowed 
from Proposition \ref{prop:reach_set_bound_safe}, conclusion 
\eqref{eq:cor_safe_trajectories} of Corollary 
\ref{cor:repaired_tll_reach_set_bound} indicates that \textbf{the repaired 
controller $\overbar{\Xi}^{(m)}_{N,M}$ remains safe in the sense of }Problem 
\ref{prob:main_problem}\emph{(i)} -- i.e. closed-loop trajectories emanating 
from $X_\textsf{safe}$ remain safe on horizon $T$.

For the subsequent development of our algorithm, 
\eqref{eq:repair_beta_constraint} and \eqref{eq:repair_L_constraint} will play 
the crucial role of ensuring that the repaired controller respects the 
guarantee of Problem \ref{prob:main_problem}\emph{(i)}.


\subsection{Global TLL Alteration for Repaired Controller Activation} 
\label{sub:global_nn_stasis}
In the context of local repair, we identified the local linear function 
instantiated by the TLL controller, and repaired the parameters associated with 
that particular function -- i.e. the repairs were affected on a particular, 
indexed row of $W_\ell^\kappa$ and $b_\ell^\kappa$. We then proceeded under the 
assumption that the affine function \emph{at that index} would remain active in 
the output of the TLL network at the counterexample, \emph{even 
\underline{after}} altering its parameters. Unfortunately, this is not case in 
a TLL network per se, since the value of each local linear function at a point 
interacts with the selector matrices (see Definition \ref{def:scalar_tll}) to 
determine whether it is active or not. In other words, changing the parameters 
of a particular indexed local linear function in a TLL will change its output 
value at any given point (in general), and hence also the region on which said 
indexed local linear function is \emph{active}. Analogous to the local 
alteration consider before, we thus need to \textbf{devise \underline{global} 
constraints sufficient to enforce the activation of the repaired controller at} 
$x_\textsf{c.e.}$.

This observation is manifest in the computation structure that defines a TLL 
NN: a particular affine function is active in the output of the TLL if and only 
if it is active in the output of one of the $\min$ networks (see Definition 
\ref{def:scalar_tll}), and the output of that same $\min$ network exceeds the 
output of all others, thereby being active at the output of the final $\max$ 
network (again, see Definition \ref{def:scalar_tll}). Thus, ensuring that a 
particular, indexed local linear function is active at the output of a TLL 
entails ensuring that that function 
\begin{enumerate}[label={(\itshape \alph*)}]
	\item appears at the output of one of the $\min$ networks; and

	\item appears at the output of the $\max$ network, by exceeding the 
		outputs of all the \emph{other} $\min$ networks.
\end{enumerate}
Notably, this sequence also suggests a mechanism for meeting the task at hand: 
ensuring that the repaired controller remains active at the counter example.

Formally, we have the following proposition.
\begin{proposition}\label{prop:activation_constraint}
	Let $\Xi^{(m)}_{N,M}$ be a TLL NN over $\mathbb{R}^n$ with output-component 
	linear function matrices $\Theta^\kappa_\ell = (W^\kappa_\ell, 
	b^\kappa_\ell)$ as usual, and let $x_\textsf{c.e.} \in \mathbb{R}^n$.

	Then the index $\text{act}_\kappa \in \{1, \dots, N\}$ denote the local 
	linear function that is active at $x_\textsf{c.e.}$ for output $\kappa$, as 
	described in Corollary \ref{cor:active_local_linear_fn}, if and only if 
	there exists index $\text{sel}_\kappa \in \{1, \dots, M\}$ such that
	\begin{enumerate}[label={(\itshape \roman*)}]
		\item for all $i \in S^\kappa_{\text{sel}_\kappa}$ and any $x \in 
			R_\text{a}$,
			\begin{equation}
				\llbracket W^\kappa_\ell x + b^\kappa_\ell \rrbracket_{\text{act}_\kappa, \cdot}
				\leq
				\llbracket W^\kappa_\ell x + b^\kappa_\ell \rrbracket_{i, \cdot}
			\end{equation}
			i.e. the active local linear function ``survives'' the $\min$ 
			network associated with selector set 
			$S^\kappa_{\text{sel}_\kappa}$; and

		\item for all $j \in \{1, \dots, M\} \backslash 
			\{\text{sel}_\kappa\}$ there exists an index $\iota_j^\kappa \in 
			\{1, \dots, N\}$ s.t. for all $x\in R_\text{a}$
			\begin{equation}
				\llbracket W^\kappa_\ell x + b^\kappa_\ell \rrbracket_{\iota_j^\kappa, \cdot}
				\leq
				\llbracket W^\kappa_\ell x + b^\kappa_\ell \rrbracket_{\text{act}_\kappa, \cdot}
			\end{equation}
			i.e. the active local linear function ``survives'' the $\max$ 
			network of output $\kappa$ by exceeding the output of all of the 
			other $\min$ networks.
	\end{enumerate}
\end{proposition}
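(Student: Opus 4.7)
The plan is to prove the biconditional by directly unwinding the computational structure of a TLL network, namely that the $\kappa$-th output of $\nn\subarg{\Xi^{(m)}_{N,M}}$ computes $\max_{k\in\{1,\dots,M\}} \min_{i \in S^\kappa_k} \llbracket W^\kappa_\ell x + b^\kappa_\ell \rrbracket_{i,\cdot}$. Throughout, I will work on the closed connected region $R_\text{a}$ supplied by Corollary \ref{cor:active_local_linear_fn}, on which the $\kappa$-th output coincides (by the definition of ``active local linear function'' and Proposition \ref{prop:local_lin_fns_params}) with $x \mapsto \llbracket W^\kappa_\ell x + b^\kappa_\ell \rrbracket_{\text{act}_\kappa,\cdot}$.

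For the ($\Leftarrow$) direction, assume (i) and (ii) hold for some $\text{sel}_\kappa$. Condition (i) --- together with the fact that $\text{act}_\kappa$ lies in $S^\kappa_{\text{sel}_\kappa}$ (implicit from the equality case $i=\text{act}_\kappa$) --- forces $\min_{i \in S^\kappa_{\text{sel}_\kappa}} \llbracket W^\kappa_\ell x + b^\kappa_\ell \rrbracket_i = \llbracket W^\kappa_\ell x + b^\kappa_\ell \rrbracket_{\text{act}_\kappa}$ for all $x \in R_\text{a}$. Condition (ii) upper bounds each other branch: for every $j \neq \text{sel}_\kappa$, $\min_{i \in S^\kappa_j} \llbracket W^\kappa_\ell x + b^\kappa_\ell \rrbracket_i \leq \llbracket W^\kappa_\ell x + b^\kappa_\ell \rrbracket_{\iota_j^\kappa} \leq \llbracket W^\kappa_\ell x + b^\kappa_\ell \rrbracket_{\text{act}_\kappa}$. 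Taking the $\max$ over the $M$ branches yields exactly $\llbracket W^\kappa_\ell x + b^\kappa_\ell \rrbracket_{\text{act}_\kappa}$ on $R_\text{a}$, which is the claim.

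For the ($\Rightarrow$) direction, assume that $\text{act}_\kappa$ indexes the active local linear function on $R_\text{a}$ as in Corollary \ref{cor:active_local_linear_fn}. Then the $M$-element $\max$ operation returns the scalar $\llbracket W^\kappa_\ell x + b^\kappa_\ell \rrbracket_{\text{act}_\kappa}$ on $R_\text{a}$, so at least one branch $\text{sel}_\kappa \in \{1,\dots,M\}$ must realize this value on a relatively open subset of $R_\text{a}$; by continuity of each min-branch and the fact that both sides are affine in $x$ on the interior of $R_\text{a}$, this equality must extend to all of $R_\text{a}$. That min-branch is a pointwise minimum of affine functions over $S^\kappa_{\text{sel}_\kappa}$, so its equality to the $\text{act}_\kappa$ affine component on an open set forces $\text{act}_\kappa \in S^\kappa_{\text{sel}_\kappa}$ and $\llbracket W^\kappa_\ell x + b^\kappa_\ell \rrbracket_{\text{act}_\kappa} \leq \llbracket W^\kappa_\ell x + b^\kappa_\ell \rrbracket_i$ for every $i \in S^\kappa_{\text{sel}_\kappa}$, proving (i). For each remaining $j$, the defining property of the $\max$ gives $\min_{i\in S^\kappa_j} \llbracket W^\kappa_\ell x + b^\kappa_\ell \rrbracket_i \leq \llbracket W^\kappa_\ell x + b^\kappa_\ell \rrbracket_{\text{act}_\kappa}$ on $R_\text{a}$, and choosing $\iota_j^\kappa$ to be any index attaining that minimum at some interior point yields (ii).

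The main obstacle I expect is the subtle handling of the ``which branch wins'' argument on $R_\text{a}$ in the forward direction: a priori, different branches of the outer $\max$ could attain the maximum on different sub-pieces of $R_\text{a}$. The resolution relies on $R_\text{a}$ having non-empty interior, so that both the active affine function and each min-branch are continuous CPWA functions which agree on a full-dimensional open set; pinning down a single winning $\text{sel}_\kappa$ valid on all of $R_\text{a}$ (possibly by shrinking to the interior of a sub-region where one specific combination of argmin/argmax is constant) is the one step that requires care. Everything else is a mechanical unfolding of Definition \ref{def:scalar_tll}.
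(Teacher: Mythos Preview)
Your approach is essentially the same as the paper's: both directions are obtained by mechanically unfolding the $\max$-of-$\min$ computational structure of Definition~\ref{def:scalar_tll} on the region $R_\text{a}$. In fact your treatment is more careful than the paper's own proof, which defers one direction entirely to the informal computations of Section~\ref{ssub:identifying_the_active_controller_at_} and, for the other, simply asserts that condition~\emph{(i)} forces the $\text{sel}_\kappa$ min-branch to equal $\llbracket W^\kappa_\ell x + b^\kappa_\ell \rrbracket_{\text{act}_\kappa}$ without discussing the ``which branch wins on all of $R_\text{a}$'' subtlety you flag.
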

This proposition follows calculations similar to those mentioned before; the 
 proof is in Appendix \ref{sec:appendix} of \cite{SantaCruzTechReport}.

The ``only if'' portion of Proposition \ref{prop:activation_constraint} thus 
directly suggests constraints to impose such that the desired local linear 
function $\text{act}_\kappa$ is active on its respective output. In particular, 
among the \textbf{non-active local linear functions at } $x_\textsf{c.e.}$, at 
least one must be altered from each of the selector sets $s_j : j \in \{1, 
\dots, M\}\backslash\{\text{sel}_\kappa\}$. The fact that these alterations 
must be made to local linear functions which are not active at the 
counterexample warrants the description of this procedure as ``global 
alteration''.

Finally, however, we note that altering these un-repaired local linear 
functions -- i.e. those not indexed by $\text{act}_\kappa$ -- may create the 
same issue described in Section 
\ref{ssub:preserving_the_initial_safety_condition}. Thus, for any of these 
global alterations additional safety constraints like 
\eqref{eq:repair_beta_constraint} and \eqref{eq:repair_L_constraint} must be 
imposed on the altered parameters.

\section{Main Algorithm} 
\label{sec:main_algorithm}

Problem \ref{prob:main_problem} permits the alteration of 
linear-layer parameters in the original TLL controller to perform repair. In 
Section \ref{sec:framework}, we developed \emph{constraints} on these 
parameters to perform
\begin{itemize}
	\item first, local alteration to ensure repair of the defective 
		controller at $x_\textsf{c.e.}$; and

	\item subsequently, global alteration to ensure that the repaired local 
		controller is activated at and around $x_\textsf{c.e.}$.
\end{itemize}
The derivations of both sets of constraints implies that they are merely 
sufficient conditions for their respective purposes, so there is no guarantee 
that any subset of them are jointly feasible. Moreover, as a ``repair'' problem, any repairs conducted must 
involve minimal alteration -- Problem \ref{prob:main_problem}\emph{(v)}.

Thus, the core of our algorithm is to employ a convex solver to find 
the minimally altered TLL parameters that also satisfy the local and global 
constraints we have outlined for successful repair with respect to the other 
aspects of Problem \ref{prob:main_problem}. The fact that the local 
repair constraints are prerequisite to the global activation constraints means 
that we will employ a convex solver on two optimization problems \emph{in 
sequence}: first, to determine the feasibility of local repair and effectuate 
that repair in a minimal way; and then subsequently to determine the 
feasibility of activating said repaired controller as required and effectuating 
that activation in a minimal way.

\subsection{Optimization Problem for Local Alteration (Repair)} 
\label{sub:optimization_problem_for_local_alteration_}
Local alteration for repair starts by identifying the active controller at the 
counterexample, as denoted by the index $\text{act}_\kappa$ for each output of 
the controller, $\kappa$. The local controller for each output is thus the 
starting point for repair in our algorithm, as described in the prequel. From 
this knowledge, an explicit constraint sufficient to repair the local 
controller at $x_\textsf{c.e.}$ is specified directly by the dynamics: see 
\eqref{eq:repair_constraint}.

Our formulation of a safety constraint for the locally repaired controller 
requires additional input, though. In particular, we need to identify constants 
$\beta_\text{max}$ and $L_\text{max}$ such that the non-local controllers 
satisfy \eqref{eq:original_beta_constraint} and 
\eqref{eq:original_L_constraint}. Then Corollary 
\ref{cor:repaired_tll_reach_set_bound} implies that
\eqref{eq:repair_beta_constraint} and \eqref{eq:repair_L_constraint} are constraints that ensure the 
repaired controller satisfies Problem \ref{prob:main_problem}\emph{(i)}. For this we take the naive approach of setting 
$\beta_\text{max} = \beta(\Xi^{(m)}_{N,M})$, and then solving for the smallest 
$L_\text{max}$ that ensures safety for that particular $\beta_\text{max}$. In 
particular, we set
\begin{equation}
	L_\text{max} \negthinspace = \inf \{ L^\prime > 0 \; | \; \beta_\text{max}  \cdot \sum_{k=0}^T {L^\prime}^k = 
		\negthickspace\negthickspace\negthickspace
		\inf_{\overset{x_s \in X_\textsf{safe}}{\scriptscriptstyle x_u \in X_\textsf{unsafe}}}
		\negthickspace\negthickspace
		\lVert x_s - x_u \rVert
	\}.
\end{equation}

Given this information the local repair optimization problem can be formulated 
for a multi-output TLL as:
\begin{align}
	\text{\optprob{Local}}: 
	&\raisebox{-7pt}{
		$
		\overset{
			\displaystyle\text{min}
		}{
			\scriptscriptstyle
			\overbar{w}_{\text{act}_\kappa}^\kappa,
			\overbar{b}_{\text{act}_\kappa}^\kappa
		}
		$
	}
	\negthinspace
	\sum_{\kappa=1}^m
	\lVert \llbracket W^\kappa_\ell \rrbracket_{\text{act}_\kappa} \negthickspace - \overbar{w}_{\text{act}_\kappa}^\kappa \negthinspace \rVert \negthinspace + \negthinspace
	\lVert \llbracket b^\kappa_\ell \rrbracket_{\text{act}_\kappa} \negthickspace - \overbar{b}_{\text{act}_\kappa}^\kappa \negthinspace \rVert \notag \\
	\quad\text{ s.t. }
	& f(x_\textsf{c.e.}) + g(x_\textsf{c.e.})
	\left[
	\begin{smallmatrix}
		\overbar{w}_{\text{act}_1}^1 x_\textsf{c.e.} + \overbar{b}_{\text{act}_1}^1 \\
		{\scriptscriptstyle \vdots} \\
		\overbar{w}_{\text{act}_m}^m x_\textsf{c.e.} + \overbar{b}_{\text{act}_m}^m
	\end{smallmatrix}
	\right]
	\not\in
	X_\textsf{unsafe} \notag \\
	& \forall \kappa = 1, \dots, m \;\; . \;\;
		L(
			\lVert \overbar{w}_{\text{act}_\kappa}^\kappa \rVert,
			\lVert \overbar{b}_{\text{act}_\kappa}^\kappa \rVert
		)
		\leq L_\text{max} \notag \\
	& \forall \kappa = 1, \dots, m \;\; . \;\;
		\beta(
			\lVert \overbar{w}_{\text{act}_\kappa}^\kappa \rVert,
			\lVert \overbar{b}_{\text{act}_\kappa}^\kappa \rVert
		)
		\leq \beta_\text{max} \notag \\
	& \forall \kappa = 1, \dots, m \;\; . \;\;
		L(
			\Xi^{(m)}_{N,M}
		)
		\leq L_\text{max} \notag
\end{align}
Note: the final collection of constraints on $L(\Xi^{(m)}_{N,M})$ is necessary 
to ensure that \eqref{eq:original_L_constraint} is satisfied and Corollary 
\ref{cor:repaired_tll_reach_set_bound} is applicable (equation 
\eqref{eq:original_beta_constraint} is satisfied by definition of 
$\beta_\text{max}$).


\subsection{Optimization Problem for Global Alteration (Activation)} 
\label{sub:optimization_problem_for_global_alteration_}

If the optimization problem \optprob{Local} is feasible, then the local 
controller at $x_\textsf{c.e.}$ can successfully be repaired, and the global 
activation of said controller can be considered. Since we are starting with a 
local linear function we want to be active at and around $x_\textsf{c.e.}$, we 
can retain the definition of $\text{act}_\kappa$ from the initialization of 
\optprob{Local}. Moreover, since Problem \ref{prob:main_problem} preserves the 
selector matrices of the original TLL controller, we will define the selector 
indices, $\text{sel}_\kappa$, in terms of the activation pattern of the 
\emph{original}, defective local linear controller (although this is not 
required by the repair choices we have made: other choices are possible).

Thus, in order to formulate an optimization problem for global alteration, we 
need to define constraints compatible with Proposition 
\ref{prop:activation_constraint} based on the activation/selector indices 
described above. Part \emph{(i)} of the conditions in Proposition 
\ref{prop:activation_constraint} is unambiguous at this point: it says that the 
desired active local linear function, $\text{act}_\kappa$, must have the 
minimum output from among those functions selected by selector set 
$s_{\text{sel}_\kappa}$. Part \emph{(ii)} of the conditions in Proposition 
\ref{prop:activation_constraint} is ambiguous however: we only need to specify 
\emph{one} local linear function from each of the \emph{other} min groups to be 
``forced'' lower than the desired active local linear function. In the face of 
this ambiguity, we select these functions using indices $\iota_j^\kappa : j \in 
\{1, \dots, M\} \backslash \{\text{act}_\kappa\}$ that are defined as follows:
\begin{equation}
	\iota_j^\kappa \triangleq \arg \min_{ i \in s^\kappa_j }
	\llbracket
		W^\kappa_\ell x_\textsf{c.e.} + b^\kappa_\ell
	\rrbracket_{i}.
\end{equation}
That is we form our global alteration constraint out of the non-active 
controllers which are have the \emph{lowest outputs among their respective min 
groups}. We reason that these local linear functions will in some sense require 
the least alteration in order to satisfy Part \emph{(ii)} of Proposition 
\ref{prop:activation_constraint}, which requires their outputs to be less than 
the local linear function that we have just repaired.

Thus, we can formulate the global alteration optimization problem as follows:
\begin{align}
	&\text{\optprob{Global}}: 
	\raisebox{-7pt}{
		$
		\overset{
			\displaystyle\text{min}
		}{
			\scriptscriptstyle
			\overbar{W}_{\ell}^\kappa,
			\overbar{b}_{\ell}^\kappa
		}
		$
	}
	\negthinspace
	\sum_{\kappa=1}^m
	\lVert W^\kappa_\ell - \overbar{W}_\ell \rVert
	\negthinspace + \negthinspace
	\lVert b^\kappa_\ell - \overbar{b}_\ell \rVert \notag \\
	\text{ s.t. }
	&\quad\forall \kappa = \{1, \dots, m\} \hphantom{~ \forall i \in s_{\text{sel}_\kappa}} \;.\; 
		\llbracket W^\kappa_\ell \rrbracket_{\text{act}_\kappa, \cdot}
		= 
		\overbar{w}_{\text{act}_\kappa}^\kappa \notag \\
	& \quad\forall \kappa = \{1, \dots, m\} \hphantom{~ \forall i \in s_{\text{sel}_\kappa}} \;.\; 
		\llbracket b^\kappa_\ell \rrbracket_{\text{act}_\kappa, \cdot}
		= 
		\overbar{b}_{\text{act}_\kappa}^\kappa \notag \\
	&\quad\forall \kappa = \{1, \dots, m\} ~ \forall i \in s_{\text{sel}_\kappa} \;.\;
		\overbar{w}_{\text{act}_\kappa}^\kappa x_\textsf{c.e.}
			+ 
		\overbar{b}_{\text{act}_\kappa}^\kappa \notag \\
		&\hspace{150pt}\leq
		\llbracket W^\kappa_\ell x_\textsf{c.e.} + b^\kappa_\ell \rrbracket_{i} \notag \\
	&\quad\forall \kappa = \{1, \dots, m\} \notag \\
		&\hspace{24pt}\forall j \in \{1, \dots, M\}\backslash\{\text{sel}_\kappa\} \;.\;
		\llbracket W^\kappa_\ell x_\textsf{c.e.} + b^\kappa_\ell \rrbracket_{\iota_j^\kappa} \notag \\
		&\hspace{150pt}\leq
		\overbar{w}_{\text{act}_\kappa}^\kappa x_\textsf{c.e.}
			+ 
		\overbar{b}_{\text{act}_\kappa}^\kappa \notag
		\notag
\end{align}
where of course $\overbar{w}_{\text{act}_\kappa}^\kappa$ and $ 
\overbar{b}_{\text{act}_\kappa}^\kappa$ are the repaired local controller 
parameters obtained from the optimal solution of \optprob{Local}. Note that the 
first two sets of equality constraints merely ensure that \optprob{Global} does 
not alter these parameters.


\subsection{Main Algorithm} 
\label{sub:main_algorithm}

A pseudo-code description of our main algorithm is shown in Algorithm 
\ref{alg:repair_TLL}, as \texttt{repairTLL}. It collects all of the 
initializations from Section \ref{sec:framework}, Subsection 
\ref{sub:optimization_problem_for_local_alteration_} and Subsection 
\ref{sub:optimization_problem_for_global_alteration_}. Only the functions 
\texttt{FindActCntrl} and \texttt{FindActSlctr} encapsulate procedures defined 
in this paper; their implementation is nevertheless adequately described in 
Subsection \ref{ssub:identifying_the_active_controller_at_} and Proposition 
\ref{prop:activation_constraint}, respectively. The correctness of 
\texttt{repairTLL} follows from the results in those sections.

\setlength{\textfloatsep}{5pt}
\IncMargin{0.5em}
\begin{algorithm}[!t]


\SetKwData{false}{False}
\SetKwData{true}{True}

\SetKwData{succArray}{succArray}
\SetKwData{w}{W}
\SetKwData{b}{b}
\SetKwData{wr}{Wr}
\SetKwData{br}{br}
\SetKwData{i}{i}
\SetKwData{minH}{minHRep}
\SetKwData{hyperplanes}{hyperplanes}
\SetKwData{regionhyperplanes}{regionHyperplanes}
\SetKwData{prevlevel}{PrevLevelHeap}
\SetKwData{nextlevel}{NextLevelHeap}
\SetKwData{top}{TopOfPoset}
\SetKwData{currentregion}{currentRegion}
\SetKwData{ver}{verified}

\SetKwFunction{verifyTLLNN}{TraverseHyperplaneRegions}
\SetKwFunction{solvelp}{SolveLP}
\SetKwFunction{FindSuccessors}{FindSuccessors}
\SetKwFunction{minHRep}{FindMinimalHRepresentation}
\SetKwFunction{push}{push}
\SetKwFunction{newheap}{NewHeap}
\SetKwFunction{addtoheap}{AddToHeap}
\SetKwFunction{isempty}{IsEmpty}
\SetKwFunction{removemax}{RemoveMax}
\SetKwFunction{getlinfn}{GetLinFnOnRegion}

\SetKwFunction{CntRegions}{EstimateRegionCount}
\SetKwFunction{GetHyperplanes}{GetHyperplanes}
\SetKwFunction{dim}{Dimensions}
\SetKwFunction{sat}{SATsolver}
\SetKwFunction{maxx}{Maximize}
\SetKwFunction{init}{init}
\SetKwFunction{initbools}{createBooleanVariables}
\SetKwFunction{append}{Append}
\SetKwFunction{satq}{SAT?}
\SetKwFunction{checkfeas}{CheckFeas}
\SetKwFunction{truevars}{TrueVars}
\SetKwFunction{selecthypers}{GetHyperplanes}
\SetKwFunction{getiis}{GetIIS}
\SetKwFunction{cntunique}{CountAllUniqueSubsets}

\SetKwData{gmaxSafe}{gMaxSafe}
\SetKwData{betaMax}{betaMax}
\SetKwData{d}{dSafe}
\SetKwData{Lmax}{Lmax}
\SetKwData{i}{i}
\SetKwData{IOta}{iota}
\SetKwData{sol}{sol}

\SetKwFunction{repairTLL}{repairTLL}
\SetKwFunction{BEta}{beta}
\SetKwFunction{LL}{L}
\SetKwFunction{init}{Initialize}
\SetKwFunction{solve}{Solve}
\SetKwFunction{findAct}{FindActCntrl}
\SetKwFunction{findSel}{FindActSlctr}
\SetKwFunction{feas}{feasible}
\SetKwFunction{val}{optimalValue}
\SetKwFunction{setLin}{setLinLayer}

\SetKw{Break}{break}
\SetKw{not}{not}
\SetKw{In}{in}
\SetKw{ret}{return}
\SetKw{false}{False}

\SetKwInOut{Input}{input}
\SetKwInOut{Output}{output}

\Input{
\hspace{3pt}$f,g$ system dynamics \eqref{eq:system_dynamics} \\
\hspace{3pt}$X_\textsf{ws}$ workspace set \\
\hspace{3pt}$\Xi^{(m)}_{N,M}$ TLL controller to repair \\
\hspace{3pt}$T$ safety time horizon \\
\hspace{3pt}$X_\textsf{safe}$ set of safe states under $\Xi^{(m)}_{N,M}$ \\
\hspace{3pt}$x_\textsf{c.e.}$ counterexample state
}
\Output{
\hspace{3pt}$\overbar{\Xi}^{(m)}_{N,M}$ repaired TLL controller
}
\BlankLine
\SetKwProg{Fn}{function}{}{end}%
\Fn{\repairTLL{$f,g$,$X_\textsf{ws}$,$\Xi^{(m)}_{N,M}$,$T$,$X_\textsf{safe}$,$x_\textsf{c.e.}$}}{

	\gmaxSafe $\leftarrow$ $\sup_{x_0\in X_\textsf{safe}} \lVert g(x_0) \rVert$

	\BEta(w,b) := $\sup_{x_0\in X_\textsf{safe}} \lVert f(x_0) - x_0 \rVert$

		\hspace{10pt} + \gmaxSafe * w * $\text{ext}(X_\textsf{ws})$ + \gmaxSafe * b

	\LL(w,b) := $L_f$ + $L_g$ * w * $\sup_{x_0\in X_\textsf{safe}} \lVert x_0 \rVert$

		\hspace{10pt} + w * \gmaxSafe + $L_g$ * b

	$\Omega_W$ $\leftarrow$ $\max_{
					w \in \cup_{\kappa = 1}^{m} \{ \llbracket W^\kappa_\ell \rrbracket_j | j = 1, \dots, N\}
				}
				\lVert w \rVert$

	$\Omega_b$ $\leftarrow$ $\max_{
				b \in \cup_{\kappa = 1}^{m} \{ \llbracket b^\kappa_\ell \rrbracket_j | j = 1, \dots, N\}
			}
			\lVert b \rVert$

	\betaMax $\leftarrow$
			\BEta{
			$\Omega_W$,
			$\Omega_b$
			}

	\d $\leftarrow$ $\inf_{\overset{x_s \in X_\textsf{safe}}{\scriptscriptstyle x_u \in X_\textsf{unsafe}}}
		\negthickspace
		\lVert x_s - x_u \rVert$

	\Lmax $\leftarrow$ $\inf \big\{ L^\prime | $ \betaMax*$\sum_{k=0}^T {L^\prime}^k = $\d$\negthickspace\big\}$

	\BlankLine

	$\{\text{act}_\kappa\}_{\kappa=1}^m$ $\leftarrow$ \findAct{$\Xi^{(m)}_{N,M}$, $x_\textsf{c.e.}$}

	$\{\text{sel}_\kappa\}_{\kappa=1}^m$ $\leftarrow$ \findSel{$\Xi^{(m)}_{N,M}$, $x_\textsf{c.e.}$}

	\BlankLine

	\init{\optprob{Local},\{$f,g,\Xi^{(m)}_{N,M},x_\textsf{c.e.}$,\LL,\Lmax,
		\BEta,\betaMax,$\{\text{act}_\kappa\}_{\kappa=1}^m, X_\textsf{unsafe}$\}}

	\BlankLine

	\sol $\leftarrow$ \solve{\optprob{Local}}

	\eIf{\not \sol.\feas{}}{
		\ret \false
	}{
		$\{ (\text{\normalfont{w}}^\kappa,\text{\normalfont{b}}^\kappa) \}_{\kappa=1}^m$ $\leftarrow$ \sol.\val{}
	}

	\BlankLine

	\For{$\kappa$ \In $1, \dots, m$}{
		\For{$j$ \In $\{1, \dots, M\}\backslash\{\text{sel}_\kappa\}$}{
			$\iota^\kappa_j$ $\leftarrow$ $\arg \min_{i \in s_j} \lVert \llbracket W^\kappa_\ell x_\textsf{c.e.} + b^\kappa_\ell \rrbracket_{i} \rVert$
		}
	}

	\init{\optprob{Global},\{$f,g,\Xi^{(m)}_{N,M},x_\textsf{c.e.}$,\LL,\Lmax,
		\BEta,\betaMax,$\{\text{act}_\kappa\}_{\kappa=1}^m, \{\iota^\kappa_j\}_{\kappa,j}, \{(\text{\normalfont{w}}^\kappa,\text{\normalfont{b}}^\kappa)\}$\}}

	\sol $\leftarrow$ \solve{\optprob{Global}}

	\eIf{\not \sol.\feas{}}{
		\ret \false
	}{
		$\{ (\text{\normalfont{W}}^\kappa,\text{\normalfont{B}}^\kappa) \}_{\kappa=1}^m$ $\leftarrow$ \sol.\val{}
	}

	\BlankLine

	\ret $\Xi^{(m)}_{N,M}$.\setLin{$\{ (\text{\normalfont{W}}^\kappa,\text{\normalfont{B}}^\kappa) \}_{\kappa=1}^m$}

}
\caption{\texttt{repairTLL}.}
\label{alg:repair_TLL}
\end{algorithm}
\DecMargin{0.5em}






\section{Numerical Examples}

\vspace{\extraspace} We illustrate the results in this paper on a four-wheel 
car described by the following model:
\begin{equation} \label{eq:29}
    x(t+1) = 
    \begin{bmatrix}
    x_1(t) + V \cos(x_3(t)) \cdot t_s\\
    x_2(t) + V \sin(x_3(t)) \cdot t_s\\
    x_3(t)\\
    \end{bmatrix}+
    \begin{bmatrix}
    0\\
    0\\
    t_s\\
    \end{bmatrix}v(t)
\end{equation}
where the state $x(t) = [p_x(t) \ p_y(t) \  \Psi(t)]^T$ for vehicle position $(p_x \ p_y)$ and yaw angle $\Psi$ , the and control input $v$ 
is the vehicle yaw rate. The parameters are the translational speed of the vehicle, $V$ (meters/sec); 
and the sampling period, $t_s$ (sec). For the purposes of our experiments, we consider 
a compact workspace $X_\textsf{ws} =  [-3,3] \times 
[-4, 4] \times [-\pi, \pi]$; a safe set of states $X_\textsf{safe} =  [-0.25,0.25] \times [-0.75,-0.25] \times 
[-\frac{\pi}{8}, \frac{\pi}{8}]$, which was verified using NNV \cite{TranNNVNeuralNetwork2020} over $100$ iterations; and an unsafe region $X_\textsf{unsafe}$ 
specified by $[0\ 1\ 0]\cdot x > 3$. Furthermore, we consider model parameters: $V=0.3$ m/s
and $t_s=0.01$ seconds.

All experiments were executed on an Intel Core i5 2.5-GHz processor with 8 GB of memory.
We collected $1850$ data points of state-action pairs from a PI Controller used to steer the car over $X_\textsf{ws}$ while avoiding $X_\textsf{unsafe}$. Then, to exhibit a NN controller with a counterexample, a TLL NN with $N=50$ and $M=10$ was trained from a  corrupted version of this data-set: we manually changed the control on $25$ data points close to $X_\textsf{unsafe}$ so that the car would steer into it. We simulated the trajectories of the car using this TLL NN controller for different $x_0$ and identified $x_\textsf{c.e.} =[0 \ 2.999 \ 0.2]$ as a valid counterexample for safety after two time steps. Finally, to repair this \emph{faulty} NN, we found all the required bounds for both system dynamics and NN parameters and a horizon of $T=7$. We found the required safety constraints $\beta_\text{max} = 0.0865$ and $L_\text{max}=1.4243$. Then, from $x_\textsf{c.e.}$ we obtained the controller $K=[K_w \ K_b]$ where $K_w=[-0.1442, \ -0.5424, \ -0.425]$ and $K_b=[2.223]$.

Next, we ran our algorithm to repair the counterexample using CVX (convex solver). The result of the first optimization problem, \optprob{Local}, was the linear controller: $\bar K_w = [-0.0027 \ -0.0487 \ -0.0105]$ and $\bar K_b =[-9.7845]$; this optimization required a total execution time of $1.89$ sec. The result of the second optimization problem, \optprob{Global} successfully activated the repaired controller, and had an optimal cost of $8.97$; this optimization required a total execution time of $6.53$ sec. We also compare the original TLL Norms $||W||=6.54$ and $||b||=5.6876$ with the repaired: $||\overline W||=11.029$ and $||\overline b||=5.687$.

Finally, we simulated the motion of the car using the repaired TLL NN controller for $50$ steps. Shown in Fig. 1 are the state trajectories of both original \emph{faulty} TLL controller and repaired TLL Controller starting from the $x_\textsf{c.e.}$
In the latter the TLL controller met the safety specifications.

\begin{figure}[tbh!]
\centering
\includegraphics[width=0.99\columnwidth]{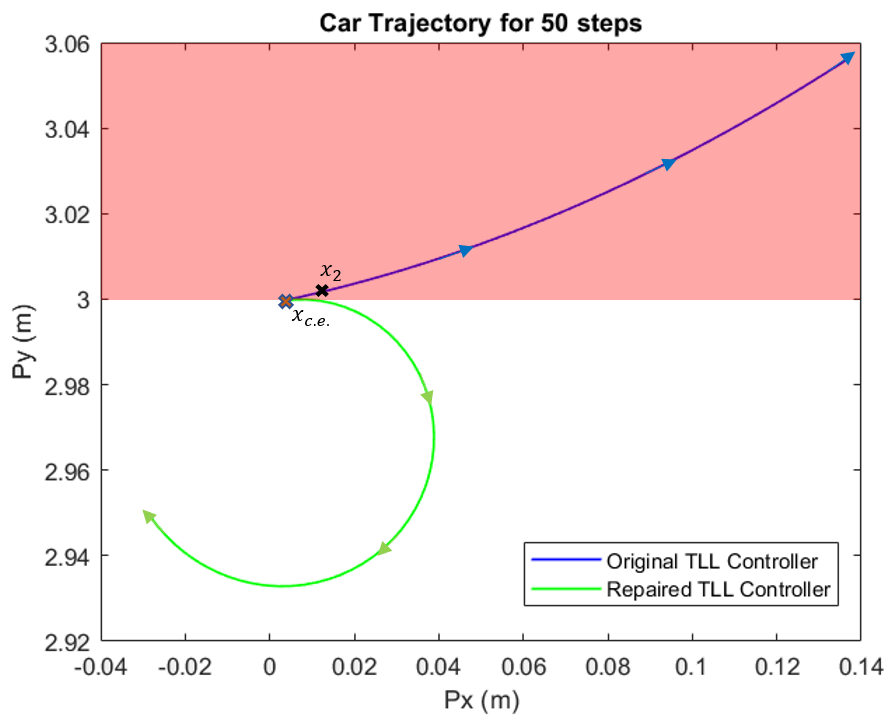}
\vspace{-8pt}
\caption{System starting from $x_\textsf{c.e.}$ goes directly into $X_\textsf{unsafe}$ and Repaired $x_\textsf{c.e.}$ produces a safe trajectory. Red area is $X_\textsf{unsafe}$, Red Cross is $x_\textsf{c.e.}$ and Black Cross shows state after 2 steps.}
\label{fig:6}
\end{figure}
\begin{figure}[tbh!]
\centering
\includegraphics[width=0.99\columnwidth]{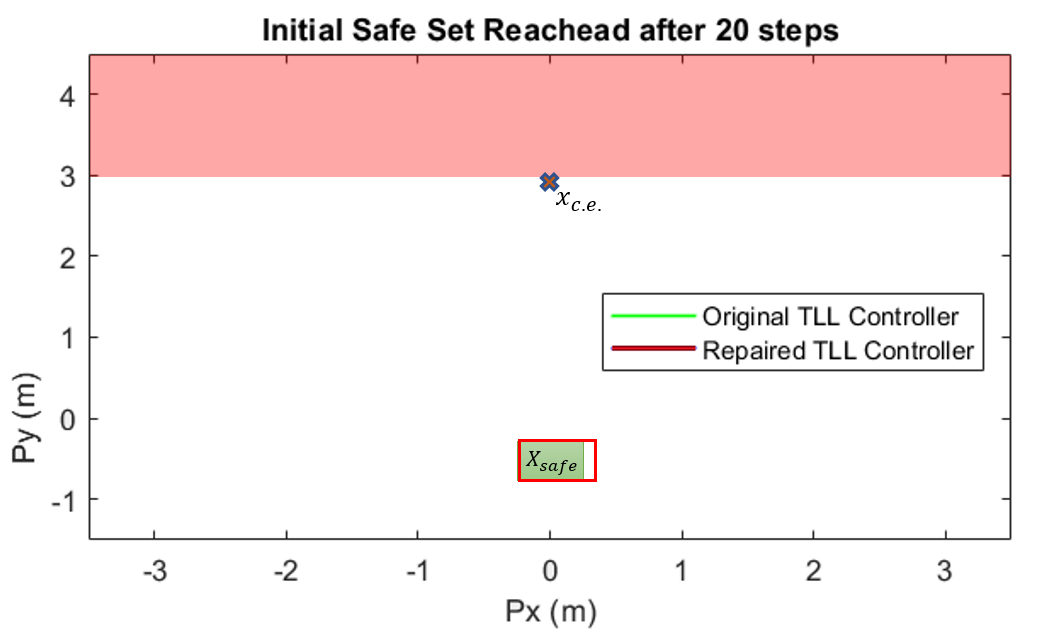}
\vspace{-8pt}
\caption{Initial Safe set before and after repair for 20 steps. Red area is $X_\textsf{unsafe}$; Red Cross is $x_\textsf{c.e.}$; $X_\textsf{ws} =  [-3,3] \times [-4, 4]$}
\label{fig:6}
\end{figure}




\bibliographystyle{ieeetr}
\bibliography{bibliography}



\clearpage

\newpage


\section{Appendix}
\label{sec:appendix}

\subsection{Proof of Corollary \ref{cor:active_local_linear_fn}} 
\label{sub:proof_of_corollary_ref}

\begin{proof}
	It is straightforward to see that every point $x$ in the domain of 
	$\nn\subarg{\Xi^{(m)}_{N,M}}$ belongs to the closure of some open set 
	$\mathfrak{D}$, on which $\nn\subarg{\Xi^{(m)}_{N,M}}$ is affine (i.e. 
	equal to one of its local linear functions). For if this weren't the case, 
	then there would be an open subset of the domain of 
	$\nn\subarg{\Xi^{(m)}_{N,M}}$, where it \emph{wasn't} affine, thus 
	contradicting the CPWA property of a ReLU NN.

	Thus, let $\mathfrak{D}_{x_\textsf{c.e.}}$ be such an open set that 
	includes $x_\textsf{c.e.}$ in its closure, and let $\ell : \mathbb{R}^n 
	\rightarrow \mathbb{R}^m$ be the local linear function of 
	$\nn\subarg{\Xi^{(m)}_{N,M}}$ on $\mathfrak{D}_{x_\textsf{c.e.}}$. We can 
	further assume that $\mathfrak{D}_{x_\textsf{c.e.}}$ is connected without 
	loss of generality, so set $R_a = \overbar{\mathfrak{D}}_{x_\textsf{c.e.}}$.

	By Proposition \ref{prop:local_lin_fns_params}, there exists indices 
	$\{\text{act}_\kappa\}_{\kappa=1}^m$ such that
	\begin{equation}
		\llbracket \ell \rrbracket_{\text{act}_\kappa}
		=
		x \mapsto \llbracket W^\kappa_\ell x + b^\kappa_\ell \rrbracket_{\text{act}_\kappa}.
	\end{equation}
	But by the definition of $\ell$ and the above, we also have that
	\begin{equation}
		\forall x \in \mathfrak{D}_\textsf{c.e.} \;.\;
		\llbracket W^\kappa_\ell x + b^\kappa_\ell \rrbracket_{\text{act}_\kappa}
		= 
		\llbracket \nn\subarg{\Xi^{(m)}_{N,M}}(x) \rrbracket_{\text{act}_\kappa}.
	\end{equation}
	Thus, the conclusion of the corollary holds on 
	$\mathfrak{D}_{x_\textsf{c.e.}}$; it holds on $R_a = 
	\overbar{D}_{x_\textsf{c.e.}}$ by continuity of 
	$\nn\subarg{\Xi^{(m)}_{N,M}}$.
\end{proof}

\subsection{Proof of Proposition \ref{prop:reach_set_bound}} 
\label{sub:proof_of_proposition_prop:reach_set_bound}

\begin{lemma}\label{lem:lipschitz_prod_lemma}
	Let $F : x \mapsto g(x)\cdot u(x)$ for Lipschitz continuous functions $g : 
	\mathbb{R}^n \rightarrow \mathbb{R}^(n\cdot m)$ (with output an $(n \times 
	m)$ real-valued matrix) and $u : \mathbb{R}^n \rightarrow \mathbb{R}^m$ 
	with Lipschitz constants $L_g$ and $L_u$, respectively.

	Then on compact subset $X \subset \mathbb{R}^n$, $F$ is Lipschitz 
	continuous with Lipschitz constant $L_F = L_g \cdot \sup_{x \in X} \lVert 
	u(x) \rVert + L_u \cdot \sup_{x \in X} \lVert g(x) \rVert$.
\end{lemma}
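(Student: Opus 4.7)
The plan is to use the standard add-and-subtract trick on the product $g(x)u(x) - g(y)u(y)$, combined with the hypothesized Lipschitz continuity of the factors and the compactness of $X$. Specifically, for any $x, y \in X$, I would write
\begin{equation*}
    g(x)u(x) - g(y)u(y) = g(x)\bigl(u(x) - u(y)\bigr) + \bigl(g(x) - g(y)\bigr)u(y),
\end{equation*}
and then bound each of the two resulting terms separately using the triangle inequality and the submultiplicative property of the norm (i.e.\ $\lVert A v \rVert \le \lVert A \rVert \cdot \lVert v \rVert$ for matrix $A$ and vector $v$).

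For the first term, I would bound $\lVert g(x) \rVert$ by $\sup_{z \in X} \lVert g(z) \rVert$ and bound $\lVert u(x) - u(y) \rVert$ by $L_u \lVert x - y \rVert$ using the Lipschitz constant of $u$. For the second term, I would bound $\lVert g(x) - g(y) \rVert$ by $L_g \lVert x - y \rVert$ and bound $\lVert u(y) \rVert$ by $\sup_{z \in X} \lVert u(z) \rVert$. Summing these two bounds and factoring out $\lVert x - y \rVert$ gives exactly the claimed Lipschitz constant $L_F$.

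The only subtlety worth mentioning is that the two suprema are indeed finite: $X$ is compact by assumption, and Lipschitz continuous functions are continuous, so $\lVert g(\cdot) \rVert$ and $\lVert u(\cdot) \rVert$ are continuous real-valued functions on a compact set and hence attain their suprema. This makes $L_F$ well-defined as a finite constant. I expect no significant obstacle here; the argument is essentially a two-line computation once the add-and-subtract decomposition is written down, and the main point of stating the lemma is to package this estimate in a form convenient for use in the proof of Proposition~\ref{prop:reach_set_bound}, where it will be applied with $u$ taken to be the CPWA controller $\Psi$ (whose Lipschitz constant on each local linear piece is controlled by $\lVert w \rVert$).
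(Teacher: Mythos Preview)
Your proposal is correct and matches the paper's own proof essentially line for line: the paper uses the same add-and-subtract decomposition (with the roles of $x$ and $x'$ swapped in which factor is ``frozen,'' a cosmetic difference that disappears after taking suprema over $X$), followed by the triangle inequality, submultiplicativity, and the Lipschitz bounds. Your additional remark that compactness of $X$ guarantees finiteness of the suprema is a small but welcome clarification the paper omits.
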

\begin{proof}
	This follows by straightforward manipulations as follows. Let $x, x^\prime 
	\in X$ and note that:
	\begin{align}
		&\lVert g(x)u(x) - g(x^\prime)u(x^\prime) \rVert \notag \\
		&= \lVert g(x)u(x) + (-g(x^\prime) + g(x^\prime))u(x) - g(x^\prime)u(x^\prime) \rVert \notag \\
		&= \lVert (g(x) - g(x^\prime))u(x) + g(x^\prime)(u(x) - u(x^\prime)) \rVert \notag \\
		&\leq \lVert g(x) - g(x^\prime) \rVert \cdot \lVert u(x) \rVert 
			+ \lVert u(x) - u(x^\prime)\rVert \cdot \lVert g(x^\prime) \rVert \notag \\
		&\leq \big( L_g \cdot \lVert u(x) \rVert + L_u \cdot \lVert g(x^\prime) \rVert \big) \cdot \lVert x - x^\prime \rVert \notag \\
		&\leq \big( L_g \cdot \sup_{x\in X} \lVert u(x) \rVert + L_u \cdot \sup_{x^\prime \in X}\lVert g(x^\prime) \rVert \big) \cdot \lVert x - x^\prime \rVert. \notag
	\end{align}
\end{proof}

\begin{proof}{(Proposition \ref{prop:reach_set_bound})}
	We will expand and bound the quantity on the left-hand side of the 
	conclusion, \eqref{eq:main_bound_prop_conclusion}.
	\begin{align}
		&\lVert \zeta^{x_0}_{T}(\Psi) - x_0 \rVert \notag \\
		&\;\;=\lVert \zeta^{x_0}_{T}(\Psi) - \zeta^{x_0}_{T-1}(\Psi) + \zeta^{x_0}_{T-1}(\Psi) - x_0 \rVert \notag \\
		&\;\;\leq \lVert \zeta^{x_0}_{T}(\Psi) - \zeta^{x_0}_{T-1}(\Psi) \rVert + \lVert \zeta^{x_0}_{T-1}(\Psi) - x_0 \rVert 
		\label{eq:base_case_bound_induction}
	\end{align}
	We then bound the first term as follows:
	\begin{align}
		&\lVert \zeta^{x_0}_{T}(\Psi) - \zeta^{x_0}_{T-1}(\Psi) \rVert \notag \\
		&\;\;\leq \lVert f(\zeta^{x_0}_{T-1}(\Psi)) - f(\zeta^{x_0}_{T-2}(\Psi)) \rVert  \notag \\
		&+ \Big\lVert g(\zeta^{x_0}_{T-1}(\Psi))\cdot \left[ w(\zeta^{x_0}_{T-1}(\Psi)) \cdot \zeta^{x_0}_{T-1}(\Psi) + b(\zeta^{x_0}_{T-1}(\Psi))\right] \notag \\
		&- g(\zeta^{x_0}_{T-2}(\Psi))\cdot \left[ w(\zeta^{x_0}_{T-2}(\Psi)) \cdot \zeta^{x_0}_{T-2}(\Psi) + b(\zeta^{x_0}_{T-2}(\Psi))\right] \Big\rVert
		\label{eq:main_bound_inequality}
	\end{align}
	where the functions $w : \mathbb{R}^n \rightarrow \mathbb{R}^n$ and $b : 
	\mathbb{R}^n \rightarrow \mathbb{R}$ return a (unique) choice of the linear 
	(weights) and affine (bias) of the local linear function of $\Psi$ that is 
	active at their argument.

	Now, we collect the $w(\cdot)$ and $b(\cdot)$ terms in right-hand side of 
	\eqref{eq:main_bound_inequality}. That is:
	\begin{align}
		&\lVert \zeta^{x_0}_{T}(\Psi) - \zeta^{x_0}_{T-1}(\Psi) \rVert \notag \\
		&\;\;\leq \lVert f(\zeta^{x_0}_{T-1}(\Psi)) - f(\zeta^{x_0}_{T-2}(\Psi)) \rVert  \notag \\
		&+ \Big\lVert g(\zeta^{x_0}_{T-1}(\Psi)) w(\zeta^{x_0}_{T-1}(\Psi)) \zeta^{x_0}_{T-1}(\Psi) \notag \\ 
		&\qquad\qquad\qquad - g(\zeta^{x_0}_{T-2}(\Psi) w(\zeta^{x_0}_{T-2}(\Psi)) \zeta^{x_0}_{T-2}(\Psi) \big\rVert \notag \\
		&+ \Big\lVert g(\zeta^{x_0}_{T-1}(\Psi)) b(\zeta^{x_0}_{T-1}(\Psi)) - g(\zeta^{x_0}_{T-2}(\Psi) b(\zeta^{x_0}_{T-2}(\Psi)) \big\rVert \notag 
	\end{align}
	The first term in the above can be directly bounded using the Lipschitz 
	constant of $f$. Also, since there are only finitely many local linear 
	function of $\Psi$, $b(\cdot)$ takes one of finitely many values across the 
	entire state space, and we may bound the associated term using this 
	observation. Finally, we can Lemma \ref{lem:lipschitz_prod_lemma} to the 
	second term, noting that the linear function defined by $w(\cdot)$ has 
	Lipschitz constant $\lVert w(\cdot) \rVert$ and there are only finitely 
	many possible values for this quantity (one for each local linear 
	function). This yields the following bound:
	\begin{align}
		&\lVert \zeta^{x_0}_{T}(\Psi) - \zeta^{x_0}_{T-1}(\Psi) \rVert \notag \\
		&\;\;\leq L_f \cdot \lVert \zeta^{x_0}_{T-1}(\Psi) - \zeta^{x_0}_{T-2}(\Psi) \rVert  \notag \\
		&+ \big(
			L_g \cdot \sup_{x\in X_\textsf{ws}}\lVert w(x) \cdot x \rVert + \max_k \lVert w_k \rVert \sup_{x\in X_\textsf{ws}} \lVert g(x) \rVert
		\big) \notag \\
		&\qquad\qquad \cdot \lVert \zeta^{x_0}_{T-1}(\Psi) - \zeta^{x_0}_{T-2}(\Psi) \rVert \notag \\
		&+ \max_k \lVert b_k \rVert \cdot L_g \cdot \lVert \zeta^{x_0}_{T-1}(\Psi) - \zeta^{x_0}_{T-2}(\Psi) \rVert
		\notag
	\end{align}
	If we simplify, then we see that we have
	\begin{multline} 
		\lVert \zeta^{x_0}_{T}(\Psi) - \zeta^{x_0}_{T-1}(\Psi) \rVert \\
		\leq L_\text{max}(\Psi) \cdot 
		\lVert \zeta^{x_0}_{T-1}(\Psi) - \zeta^{x_0}_{T-2}(\Psi) \rVert 
		\label{eq:induction_step_eq}
	\end{multline}
	with $L_\text{max}(\Psi)$ as defined in the statement of the Proposition.

	Now, we expand the final term of \eqref{eq:base_case_bound_induction} as 
	\begin{multline}
		\lVert \zeta^{x_0}_{T-1}(\Psi) - x_0 \rVert \\
		\leq \lVert 
		\zeta^{x_0}_{T-1}(\Psi) - \zeta^{x_0}_{T-2}(\Psi) \rVert + \lVert 
		\zeta^{x_0}_{T-2}(\Psi) - x_0 \rVert
	\end{multline}
	so that \eqref{eq:base_case_bound_induction} can be rewritten as:
	\begin{multline}
		\lVert \zeta^{x_0}_{T}(\Psi) - x_0 \rVert \leq (L_\text{max}(\Psi) + 1)\cdot 
		\lVert \zeta^{x_0}_{T-1}(\Psi) - \zeta^{x_0}_{T-2}(\Psi) \rVert \\
		+ 
		\lVert \zeta^{x_0}_{T-2}(\Psi) - x_0 \rVert.
		\label{eq:real_base_case}
	\end{multline}

	But now we can proceed inductively, applying the bound 
	\eqref{eq:induction_step_eq} mutatis mutandis to the expression $\lVert 
	\zeta^{x_0}_{T-1}(\Psi) - \zeta^{x_0}_{T-2}(\Psi) \rVert$ in 
	\eqref{eq:real_base_case}. This induction can proceed until the factor to 
	be expanded using \eqref{eq:induction_step_eq} has the form $\lVert 
	\zeta^{x_0}_{T-(T-1)}(\Psi) - \zeta^{x_0}_{T-(T)}(\Psi) \rVert$, which will 
	yield the bound:
	\begin{equation}
		\lVert \zeta^{x_0}_{T}(\Psi) - x_0 \rVert \leq
		\lVert \zeta^{x_0}_1(\Psi) - x_0 \rVert \cdot \sum_{k=0}^T {L_\text{max}(\Psi)}^k.
	\end{equation}

	Thus it remains to bound the quantity $\lVert \zeta^{x_0}_1(\Psi) - x_0 
	\rVert$. We proceed to do this in a relatively straightforward way:
	\begin{align}
		&\lVert \zeta^{x_0}_1(\Psi) - x_0 \rVert \notag \\
		&= \lVert f(x_0) + g(x_0)\left[ w(x_0) x_0 + b(x_0) \right] - x_0 \rVert \notag \\
		&\leq \lVert f(x_0) - x_0 \rVert + \lVert g(x_0) \rVert \cdot \lVert w(x_0) \rVert \cdot \lVert x_0 \rVert \notag \\
		&\qquad\qquad + \lVert g(x_0) \rVert \cdot \lVert b(x_0) \rVert.
	\end{align}

	Finally, since we're interested in bounding the original quantity, $\lVert 
	\zeta^{x_0}_T(\Psi) - x_0 \rVert $, over all $x_0 \in X_\textsf{safe}$, we 
	can upper-bound the above by taking a supremum over all $x_0 \in 
	X_\textsf{safe}$. Thus,
	\begin{multline}
	 	\sup_{x \in X_\textsf{safe}} \lVert \zeta^{x_0}_{T}(\Psi) - x_0 \rVert \\
	 	\leq
		\sup_{x \in X_\textsf{safe}} \negthinspace \lVert \zeta^{x_0}_1(\Psi) - x_0 \rVert \cdot \sum_{k=0}^T {L_\text{max}(\Psi)}^k
	\end{multline} 
	where the $\sup$ on the right-hand side does not interact with the 
	summation, since $L_\text{max}(\Psi)$ is constant with respect to $x_0$. 
	The final conclusion is obtained by observing that that
	\begin{equation}
		\sup_{x \in X_\textsf{safe}} \negthinspace \lVert \zeta^{x_0}_1(\Psi) - x_0 \rVert \leq \beta_\text{max}(\Psi)
	\end{equation}
	with $\beta_\text{max}(\Psi)$ as defined in the statement of the 
	proposition.
\end{proof}


\subsection{Proof of Proposition \ref{prop:reach_set_bound_safe}} 
\label{sub:proof_of_proposition_prop:reach_set_bound_safe}

\begin{proof}
	This is more or less a straightforward application of Proposition 
	\ref{prop:reach_set_bound}.

	Indeed, by Proposition \ref{prop:reach_set_bound} and the assumption of 
	this proposition, we conclude that
	\begin{align}
		\lVert \zeta^{x_0}_T(\Psi) - x_0 \lVert
			&\leq 
			\beta_\text{max}(\Psi) \cdot \sum_{k=0}^T {L_\text{max}(\Psi)}^k \notag \\
		&\leq
			\beta_\text{max} \cdot \sum_{k=0}^T {L_\text{max}}^k. \notag
	\end{align}
	Hence, $\delta x = \zeta^{x_0}_T(\Psi) - x_0$ triggers the implication in 
	\eqref{eq:safe_reach_constants}, and we conclude that
	\begin{equation}
		\forall x_0 \in X_\textsf{safe} \;.\; x_0 + \delta x = \zeta^{x_0}_T(\Psi) \not\in X_\textsf{unsafe}
	\end{equation}
	as required.
\end{proof}


\subsection{Proof of Corollary \ref{cor:repaired_tll_reach_set_bound}} 
\label{sub:proof_of_corollary_cor:repaired_tll_reach_set_bound}

\begin{proof}
	Corollary \ref{cor:repaired_tll_reach_set_bound} is simply a 
	particularization of Proposition \ref{prop:reach_set_bound_safe} to the 
	repaired TLL network, $\overbar{\Xi}^{(m)}_{N,M}$. It is only necessary to 
	note that we have separate conditions to ensure that conclusion of 
	Proposition \ref{prop:reach_set_bound_safe} applied both to the 
	\emph{original} TLL network (i.e. \eqref{eq:original_beta_constraint} and 
	\eqref{eq:original_L_constraint}), as well as the repaired TLL parameters 
	(i.e. \eqref{eq:repair_beta_constraint} and \eqref{eq:repair_L_constraint}).
\end{proof}


\subsection{Proof of Proposition \ref{prop:activation_constraint}} 
\label{sub:proof_of_proposition_prop:activation_constraint}

\begin{proof}
	The ``if'' portion of this proof is suggested by the computations in 
	Section \ref{ssub:identifying_the_active_controller_at_}, so we focus on 
	the ``only if'' portion.

	Thus, let $\{\text{act}_\kappa\}_{\kappa=1}^m \in \{1, \dots, N\}^m$ be a 
	set of indices, and assume that there exists an index 
	$\{\text{sel}_\kappa\}_{\kappa=1}^m \in \{1, \dots, M\}^m$ for which the 
	``only if'' assumptions of the proposition are satisfied. We will show that 
	the local linear function with indices 
	$\{\text{act}_\kappa\}_{\kappa=1}^m$ is in fact active on $R_a$.

	This will follow more or less directly by simply carrying out the 
	computations of the TLL NN on $R_a$. In particular, by condition 
	\emph{(i)}, we have that $\nn\subarg{\Theta_{\min_N} \negthickspace \circ 
	\Theta_{S^\kappa_{\text{sel}_\kappa}} \negthickspace \negthinspace \circ 
	\Theta_\ell^\kappa} = x \mapsto \llbracket W^\kappa_\ell x + b^\kappa_\ell 
	\rrbracket_{\text{act}_\kappa}$ for all $x \in R_a$, $\kappa=1, \dots, m$. 
	Then, by condition \emph{(ii)} we have that for all $j \in \{1, \dots, M\} 
	\backslash \{\text{sel}_\kappa\}$ and $x \in R_a$
	\begin{equation}\label{eq:ordering_of_min_groups}
		\nn\subarg{\Theta_{\min_N} \negthickspace \circ 
			\Theta_{S^\kappa_{\text{sel}_\kappa}} \negthickspace \negthinspace \circ 
			\Theta_\ell^\kappa
		}(x)
		\leq
		\nn\subarg{\Theta_{\min_N} \negthickspace \circ 
			\Theta_{S^\kappa_{\text{sel}_\kappa}} \negthickspace \negthinspace \circ 
			\Theta_\ell^\kappa
		}(x).
	\end{equation}
	The conclusion thus follows immediately from 
	\eqref{eq:ordering_of_min_groups} and the fact that the $\min$ groups for 
	the input to the final layer of the output's TLL, $\Theta_{\max_M}$.
\end{proof}



\end{document}